\definecolor{darkblue}{rgb}{0.0,0.0,0.55}
\def\namedlabel#1#2{\begingroup
	#2%
	\def\@currentlabel{#2}%
	\phantomsection\label{#1}\endgroup
}
\newtheorem{lemma}{Lemma}
\newtheorem{theorem}{Theorem}
\newtheorem{corollary}{Corollary}
\newtheorem{preposition}{Preposition}
\newtheorem{definition}{Definition}
\newtheorem*{lemma*}{Lemma}
\newtheorem*{theorem*}{Theorem}
\newtheorem*{corollary*}{Corollary}
\newtheorem*{remark*}{Remark}
\newtheorem*{definition*}{Definition}
\newcommand{\sgd}{\textsc{Sgd}\xspace}
\newcommand{\svrg}{\textsc{Svrg}\xspace}
\newcommand{\adam}{\textsc{Adam}\xspace}
\newcommand{\rmsprop}{\textsc{RMSprop}\xspace}
\newcommand{\gopt}{\textsc{Gradient-Focused-Optimizer}}
\newcommand{\hopt}{\textsc{Hessian-Focused-Optimizer}}
\newcommand{\gd}{\textsc{GD}\xspace}
\newcommand{\hd}{\textsc{HessianDescent}\xspace}
\newcommand{\cd}{\textsc{CubicDescent}\xspace}
\newcommand{\acd}{\textsc{ApproxCubicDescent}\xspace}
\newcommand{\reals}{\mathbb{R}}
\newcommand{\nlsum}{\sum\nolimits}
\let\emptyset\varnothing
\title{A Generic Approach for Escaping Saddle points}
\author{
  Sashank J Reddi*, Manzil Zaheer*${}^\dagger$\\
  Machine Learning Department \\
  Carnegie Mellon University \\
  Pittsburgh, PA 15213 \\
  \texttt{sjakkamr@cs.cmu.edu, manzil@cmu.edu} \\
  \And
  Suvrit  Sra \\
  Laboratory for Information \& Decision Systems \\
  Massachusetts Institute for Technology \\
  Cambridge, MA 02139 \\
  \texttt{suvrit@mit.edu} \\
  \And
  Barnab\'as  P\'oczos \\
  Machine Learning Department \\
  Carnegie Mellon University \\
  Pittsburgh, PA 15213 \\
  \texttt{bapoczos@cs.cmu.edu} \\
  \And
  Francis  Bach \\
  Departement d'Informatique \\
  Ecole Normale Superieure \\
  75012 Paris \\
  \texttt{francis.bach@inria.fr} \\
  \And
  Ruslan  Salakhutdinov \\
  Machine Learning Department \\
  Carnegie Mellon University \\
  Pittsburgh, PA 15213 \\
  \texttt{rsalakhu@cs.cmu.edu} \\
  \And
  Alexander J Smola \\
  Deep Learning \\
  Amazon Web Services \\
  Palo Alto, CA 94301\\
  \texttt{alex@smola.org} \\
}
\begin{document}

\maketitle

\begin{abstract}
A central challenge to using first-order methods for optimizing nonconvex problems is the presence of saddle points. First-order methods often get stuck at saddle points, greatly deteriorating their performance. Typically, to escape from saddles one has to use second-order methods. However, most works on second-order methods rely extensively on expensive Hessian-based computations, making them impractical in large-scale settings. To tackle this challenge, we introduce a generic framework that minimizes Hessian based computations while at the same  time provably converging to second-order critical points. Our framework carefully alternates between a first-order and a second-order subroutine, using the latter only close to saddle points, and yields convergence results competitive to the state-of-the-art. Empirical results suggest that our strategy also enjoys good practical performance.
\end{abstract}

\section{Introduction}
\label{sec:intro}
We study nonconvex \emph{finite-sum} problems of the form
\begin{equation}
\label{eq:finite-sum}
\min_{x\in \reals^d}\ f(x) := \frac{1}{n}\sum_{i=1}^n f_i(x),
\end{equation}
where neither $f:\mathbb{R}^d \rightarrow \mathbb{R}$ nor the individual functions $f_i:\mathbb{R}^d \rightarrow \mathbb{R}$ ($i \in [n]$) are necessarily convex. We operate in a general nonconvex setting except for few smoothness assumptions like Lipschitz continuity of the gradient and Hessian. 
Optimization problems of this form arise naturally in machine learning and statistics as  empirical risk minimization (ERM) and M-estimation respectively. 

In the large-scale settings, algorithms based on first-order information of functions $f_i$ are typically favored as they are relatively inexpensive and scale seamlessly. An algorithm widely used in practice is stochastic gradient descent ($\sgd$), which has the  iterative update:
\begin{equation}
x_{t+1} = x_{t} - \eta_t \nabla f_{i_t}(x_t),
\end{equation}
where $i_t \in [n]$ is a randomly chosen index and $\eta_t$ is a learning rate. Under suitable selection of the learning rate, we can show that $\sgd$ converges to a point $x$ that, in expectation, satisfies the stationarity condition $\| \nabla f(x)\| \leq \epsilon$ in $O(1/\epsilon^4)$ iterations \cite{Ghadimi13}. This result has two critical weaknesses: (i) It does not ensure convergence to local optima or second-order critical points; (ii) The rate of convergence of the $\sgd$ algorithm is slow. 

For general nonconvex problems, one has to settle for a more modest goal than  sub-optimality, as finding the global minimizer of finite-sum nonconvex problem will be in general intractably hard. Unfortunately, $\sgd$ does not even ensure second-order critical conditions such as local optimality since it can get stuck at saddle points. This issue has recently received considerable attention in the ML community, especially in the context of deep learning~\cite{Dauphin14,Dauphin15,Choromanska14}. These works argue that saddle points are highly prevalent in most optimization paths, and are the primary obstacle for training large deep networks. To tackle this issue and achieve a second-order critical point for which $\|\nabla f\| \leq \epsilon$ and $\nabla^2 f \succeq -\sqrt{\epsilon} \mathbb{I}$, we need algorithms that either use the Hessian explicitly or exploit its structure. 

A key work that explicitly uses Hessians to obtain faster convergence rates is the cubic regularization (CR) method \cite{nesterov2006}. In particular, \citet{nesterov2006} showed that CR requires $O(1/\epsilon^{3/2})$ iterations to achieve the second-order critical conditions. However, each iteration of CR is expensive as it requires computing the Hessian and solving multiple linear systems, each of which has complexity $O(d^\omega)$ ($\omega$ is the matrix multiplication constant), thus, undermining the benefit of its faster convergence. Recently, \citet{Agarwal16} designed an algorithm to solve the CR more efficiently, however, it still exhibits slower convergence in practice compared to first-order methods. Both of these approaches use Hessian based optimization in each iteration, which make them slow in practice. 

A second line of work focuses on using Hessian information (or its structure) whenever the method gets stuck at stationary points that are not second-order critical. To our knowledge, the first work in this line is~\cite{Ge15}, which shows that for a class of functions that satisfy a special property called ``strict-saddle'' property, a noisy variant of $\sgd$ can converge to a point close to a local minimum. For this class of functions, points close to saddle points have a Hessian with a large negative eigenvalue, which proves instrumental in escaping saddle points using an isotropic noise. While such a noise-based method is appealing as it only uses first-order information, it has a very bad dependence on the dimension $d$, and furthermore, the result only holds when the strict-saddle property is satisfied \cite{Ge15}. More recently, \citet{Yair16} presented a new faster algorithm that alternates between first-order and second-order subroutines. However, their algorithm is designed for the simple case of $n = 1$ in~\eqref{eq:finite-sum} and hence, can be expensive in practice.

\sidecaptionvpos{figure}{t}
\begin{SCfigure}[50]
    \hspace{-5mm}
	\includegraphics[trim={5mm 5mm 5mm 5mm},clip,  width=0.28\textwidth]{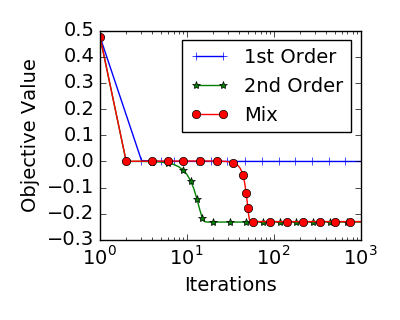}
	\includegraphics[trim={5mm 5mm 5mm 5mm},clip, width=0.28\textwidth]{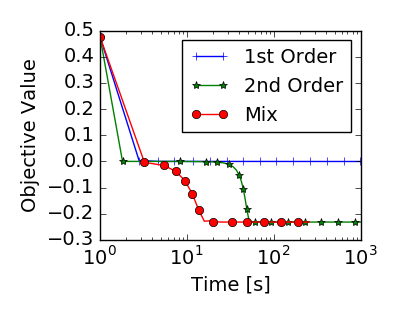}
	\caption{\small 
	First order methods like GD can potentially get stuck at saddle points. Second-order methods can escape it in very few iterations (as observed in the left plot) but at the cost of expensive Hessian based iterations (see time plot to the right). The proposed framework, which is a novel mix of the two strategies, can escape saddle points \textit{faster} in time by carefully trading off computation and iteration complexity.}
	\vspace{-5mm}
\end{SCfigure}

Inspired by this line of work, we develop a general framework for finding second-order critical points. The key idea of our framework is to use first-order information for the most part of the optimization process and invoke Hessian information only when stuck at stationary points that are not second-order critical. We summarize the key idea and main contributions of this paper below.
 
\textbf{Main Contributions:}  We develop an algorithmic framework for converging to second-order critical points and provide convergence analysis for it. Our framework  carefully alternates between two subroutines that use gradient and Hessian information, respectively, and ensures second-order criticality. Furthermore, we present two instantiations of our framework and provide convergence rates for them. In particular, we show that a simple instance of our framework, based on $\svrg$, achieves convergence rates competitive with the current state-of-the-art methods; thus highlighting the simplicity and applicability of our framework. Finally, we demonstrate the empirical performance of a few algorithms encapsulated by our framework and show their superior performance.

\paragraph{Related Work.}
\label{sec:relatedwork}
There is a vast literature on algorithms for solving optimization problems of the form~\eqref{eq:finite-sum}.
A classical approach for solving such optimization problems is $\sgd$, which dates back at least to the seminal work of \cite{RobMon51}. Since then, $\sgd$ has been a subject of extensive research, especially in the convex setting~\cite{poljak1973,ljung1977,bot91,kushner2012}. Recently, new faster methods, called variance reduced (VR) methods, have been proposed for convex finite-sum problems. VR methods attain faster convergence by reducing the variance in the stochastic updates of $\sgd$, see e.g.,~\cite{Defazio14,Johnson13,Schmidt13,Konecny15,sdca,defazio2014finito}. Accelerated variants of these methods achieve the lower bounds proved in \cite{agarwal2014,lan2015}, thereby settling the question of their optimality. Furthermore, \cite{Reddi2015} developed an asynchronous framework for VR methods and demonstrated their benefits in parallel environments.

Most of the aforementioned prior works study stochastic methods in convex or very specialized nonconvex settings that admit theoretical guarantees on sub-optimality. For the general nonconvex setting, it is only recently that non-asymptotic convergence rate analysis for \sgd and its variants was obtained in~\cite{Ghadimi13}, who showed that \sgd ensures $\|\nabla f\| \le \epsilon$ (in expectation) in $O(1/\epsilon^4)$ iterations. A similar rate for parallel and distributed \sgd was shown  in~\cite{Lian2015}. For these problems, \citet{Reddi16,Reddi16b,Reddi16c} proved faster convergence rates that ensure the same optimality criteria in $O(n + n^{2/3}/\epsilon^2)$, which is an order $n^{1/3}$ faster than $\gd$. While these methods ensure convergence to \emph{stationary} points at a faster rate, the question of convergence to local minima (or in general to second-order critical points) has not been addressed. To our knowledge, convergence rates to second-order critical points (defined in Definition~\ref{def:eps-accurate}) for general nonconvex functions was first studied by \cite{nesterov2006}. However, each iteration of the algorithm in \cite{nesterov2006} is prohibitively expensive since it requires eigenvalue decompositions, and hence, is unsuitable for large-scale high-dimensional problems. More recently, \citet{Yair16,Agarwal16} presented algorithms for finding second-order critical points by tackling some practical issues that arise in \cite{nesterov2006}. However, these algorithms are either only applicable to a restricted setting or heavily use Hessian based computations, making them unappealing from a practical standpoint. \emph{Noisy} variants of first-order methods have also been shown to escape saddle points (see \cite{Ge15,Jin17,Levy16}), however, these methods have strong dependence on either $n$ or $d$, both of which are undesirable.

\vspace*{-6pt}
\section{Background \& Problem Setup}
\vspace*{-6pt}
\label{sec:background}
We assume that each of the functions $f_i$ in~\eqref{eq:finite-sum} is $L$-smooth, i.e., $\|\nabla f_i(x)-\nabla f_i(y)\| \le L\|x-y\|$ for all $i \in [n]$. Furthermore, we  assume that the Hessian of $f$ in~\eqref{eq:finite-sum} is Lipschitz, i.e., we have
\begin{equation}
\|\nabla^2 f(x) - \nabla^2 f(y)\| \leq M \|x - y\|,
\end{equation}
for all $x, y \in \reals^d$. Such a condition is typically necessary to ensure convergence of algorithms to the second-order critical points \cite{nesterov2006}. In addition to the above smoothness conditions, we also assume that the function $f$ is bounded below, i.e., $f(x) \geq B$ for all $x \in \reals^d$. 

In order to measure stationarity of an iterate $x$, similar to \cite{nesterov03,Ghadimi13,nesterov2006}, we use the condition $\|\nabla f(x)\| \le \epsilon$. In this paper, we are interested in convergence to second-order critical points. Thus, in addition to stationarity, we also require the solution to satisfy the Hessian condition $\nabla^2 f(x) \succeq -\gamma \mathbb{I}$ \cite{nesterov2006}. For iterative algorithms, we require both $\epsilon, \gamma \rightarrow 0$ as the number of iterations $T \rightarrow \infty$. When all saddle points are non-degenerate, such a condition implies convergence to a local optimum.
\begin{definition}
	\label{def:eps-accurate}
An algorithm $\mathcal{A}$ is said to obtain a point $x$ that is a $(\epsilon,\gamma)$-second order critical point if $\mathbb{E}[\|\nabla f(x)\|] \leq \epsilon$ and $\nabla^2 f(x) \succeq -\gamma \mathbb{I}$, where the expectation is over any randomness in $\mathcal{A}$.
\vspace{-1mm}
\end{definition}

We must exercise caution while interpreting results pertaining to $(\epsilon,\gamma)$-second order critical points. Such points need not be close to any local minima either in objective function value, or in the domain of~\eqref{eq:finite-sum}. For our algorithms, we use only an Incremental First-order Oracle (IFO)~\cite{agarwal2014} and an Incremental Second-order Oracle (ISO), defined below.

\begin{definition}
  An IFO takes an index $i \in [n]$ and a point $x \in \mathbb{R}^d$, and returns the pair $(f_i(x),\nabla f_i(x))$. An ISO takes an index $i \in [n]$, point $x \in \mathbb{R}^d$ and vector $v \in \mathbb{R}^d$ and returns the vector $\nabla^2 f_i(x) v$.
  \vspace{-1mm}
 \end{definition}
IFO and ISO calls are typically cheap, with ISO call being relatively more expensive. In many practical settings that arise in machine learning, the time complexity of these oracle calls is linear in $d$ \cite{Agarwal16b,Pearlmutter94}. For clarity and clean comparison, the dependence of time complexity on Lipschitz constant $L$, $M$, initial point and any polylog factors in the results is hidden.

\vspace*{-6pt}
\section{Generic Framework}
\label{sec:framework}
\vspace*{-6pt}
In this section, we propose a generic framework for escaping saddle points while solving nonconvex problems of form~\eqref{eq:finite-sum}. One of the primary difficulties in reaching a second-order critical point is the presence of saddle points. To evade such points, one needs to use properties of both gradients and Hessians. To this end, our framework is based on two core subroutines: $\gopt$ and $\hopt$. 

The idea is to use these two subroutines, each focused on different aspects of the optimization procedure. $\gopt$ focuses on using gradient information for decreasing the function. On its own, the $\gopt$ might not converge to a local minimizer since it can get stuck at a saddle point. Hence, we require the subroutine $\hopt$ to help avoid saddle points. A natural idea is to interleave these subroutines to obtain a second-order critical point. But it is not even clear if such a procedure even converges. We propose a carefully designed procedure that effectively balances these two subroutines, which not only provides meaningful theoretical guarantees, but remarkably also translates into strong empirical gains in practice.

Algorithm~\ref{alg:framework} provides pseudocode of our framework. Observe that the algorithm is still abstract, since it does not specify the subroutines $\gopt$ and $\hopt$. These subroutines determine the crucial update mechanism of the algorithm. We will present specific instance of these subroutines in the next section, but we assume the following properties to hold for these subroutines.

\begin{algorithm}[tb]\small
	\caption{Generic Framework}
	\label{alg:framework}
	\begin{algorithmic}[1]
		\STATE{\bf Input} - Initial point: $x^0$, total iterations $T$, error threshold parameters $\epsilon$, $\gamma$ and probability $p$
		\FOR{$t=1$ {\bfseries to} $T$}
		\STATE $(y^{t}, z^t) = \gopt(x^{t-1}, \epsilon)$ (refer to \ref{itm:ga1} and \ref{itm:ga2})
		\STATE Choose $u^t$ as $y^t$ with probability $p$ and $z^t$ with probability $1 - p$
		\STATE $(x^{t+1},\tau^{t+1}) = \hopt(u^t, \epsilon, \gamma)$ (refer to \ref{itm:ha1} and \ref{itm:ha2})
		\IF{$\tau^{t+1} = \emptyset$}
		\STATE {\bf Output} set $\{x^{t+1}\}$
		\ENDIF
		\ENDFOR
		\STATE{\bf Output} set $\{y^1, ... , y^T\}$
	\end{algorithmic}
\end{algorithm}

\vspace{-2mm}
\begin{itemize}[leftmargin=*]
	\item $\gopt$:   Suppose $(y,z)$ = $\gopt(x, n, \epsilon)$, then there exists positive function $g: \mathbb{N} \times \mathbb{R}^+ \rightarrow \mathbb{R}^+$, such that
	\begin{enumerate}[label=\textbf{G.\arabic*}]
		\item \label{itm:ga1} $\mathbb{E}[f(y)] \leq f(x)$,
		\item \label{itm:ga2} $\mathbb{E}[\|\nabla f(y)\|^2] \leq \frac{1}{g(n,\epsilon)}  \mathbb{E}[f(x) - f(z)]$.
	\end{enumerate}
	Here the outputs $y,z \in \mathbb{R}^d$. The expectation in the conditions above is over any randomness that is a part of the subroutine. The function $g$ will be critical for the overall rate of Algorithm~\ref{alg:framework}. Typically, $\gopt$ is a first-order method, since the primary aim of this subroutine is to focus on gradient based optimization.
	\item $\hopt$: Suppose $(y,\tau) = \hopt(x, n, \epsilon, \gamma)$ where $y \in \mathbb{R}^d$ and $\tau \in \{\emptyset, \diamond\}$. If $\tau = \emptyset$, then $y$ is a $(\epsilon,\gamma)$-second order critical point with probability at least $1 - q$. Otherwise if $\tau = \diamond$, then $y$ satisfies the following condition:
          \begin{enumerate}[label=\textbf{H.\arabic*}]
          \item \label{itm:ha1} $\mathbb{E}[f(y)] \leq f(x)$,
          \item \label{itm:ha2} $\mathbb{E}[f(y)] \leq f(x) - h(n, \epsilon, \gamma)$ when $\lambda_{\min}(\nabla^2 f(x)) \leq - \gamma$ for some function $h:\mathbb{N} \times \mathbb{R}^+ \times \mathbb{R}^+ \rightarrow \mathbb{R}^+$.
          \end{enumerate}
	Here the expectation is over any randomness in subroutine $\hopt$. The two conditions ensure that the objective function value, in expectation, never increases and furthermore, decreases with a certain rate when $\lambda_{\min}(\nabla^2 f(x)) \leq - \gamma$. In general, this subroutine utilizes the Hessian or its properties for minimizing the objective function. Typically, this is the most expensive part of the Algorithm~\ref{alg:framework} and hence, needs to be invoked judiciously.
\end{itemize}
\vspace{-2mm}

The key aspect of these subroutines is that they, in expectation, never increase the objective function value. The functions $g$ and $h$ will determine the convergence rate of Algorithm~\ref{alg:framework}. In order to provide a concrete implementation, we need to specify the aforementioned subroutines. Before we delve into those details, we will provide a generic convergence analysis for Algorithm~\ref{alg:framework}.

\vspace{-2mm}
\subsection*{Convergence Analysis} 

\begin{theorem}
\label{thm:generic-conv}
Let $\Delta = f(x^0) - B$ and $\theta = \min((1-p)\epsilon^2 g(n,\epsilon), p h(n,\epsilon,\gamma))$ . Also, let set $\Gamma$ be the output of Algorithm~\ref{alg:framework} with $\gopt$ satisfying \ref{itm:ga1} and \ref{itm:ga2} and $\hopt$ satisfying \ref{itm:ha1} and \ref{itm:ha2}.  Furthermore, $T$ be such that $T > \Delta/\theta$. 

Suppose the multiset $S = \{i_1, ... i_k\}$ are $k$ indices selected independently and uniformly randomly from \{1, ..., $|\Gamma|$\}. Then the following holds for the indices in $S$:
\begin{enumerate}[leftmargin=*]
	\item $y^{t}$, where $t \in \{i_1, ..., i_k\}$, is a $(\epsilon, \gamma)$-critical point with probability at least $1 - \max(\Delta/(T\theta), q)$.
	\item If $k = O(\log(1/\zeta)/\min(\log(\Delta/(T\theta)), \log(1/q)))$, with at least probability $1 - \zeta$,  at least one iterate $y^{t}$ where $t \in \{i_1, ..., i_k\}$ is a $(\epsilon, \gamma)$-critical point.
\end{enumerate}
\end{theorem}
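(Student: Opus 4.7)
The plan is to bound the expected per-iteration decrease $\E[f(x^{t-1})-f(x^t)]$ from below by a quantity proportional to the probability that $y^t$ fails to be $(\epsilon,\gamma)$-critical, and then telescope across $t=1,\dots,T$ against the global budget $\Delta=f(x^0)-B$.

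For the one-step analysis I would decompose
\[
\E[f(x^{t-1})-f(x^t)] = \E[f(x^{t-1})-f(u^t)] + \E[f(u^t)-f(x^t)].
\]
Conditioning on the coin flip that picks $u^t$, the first term splits as $p\,\E[f(x^{t-1})-f(y^t)] + (1-p)\,\E[f(x^{t-1})-f(z^t)]$; \ref{itm:ga1} makes the $y^t$ branch nonnegative while \ref{itm:ga2} gives the lower bound $(1-p)\,g(n,\epsilon)\,\E[\|\nabla f(y^t)\|^2]$. For the second term, \ref{itm:ha1} handles the $\lambda_{\min}(\nabla^2 f(u^t))>-\gamma$ case nonnegatively, whereas \ref{itm:ha2} supplies a drop of $h(n,\epsilon,\gamma)$ whenever $\lambda_{\min}(\nabla^2 f(u^t))\le -\gamma$; since $u^t=y^t$ with probability $p$, this propagates to at least $p\cdot h(n,\epsilon,\gamma)\cdot\Pr[\lambda_{\min}(\nabla^2 f(y^t))\le -\gamma]$.

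To pass to probabilities, let $A_t=\{\|\nabla f(y^t)\|>\epsilon\}$ and $B_t=\{\lambda_{\min}(\nabla^2 f(y^t))\le -\gamma\}$, so $y^t$ is non-critical exactly on $A_t\cup B_t$. Using $\E[\|\nabla f(y^t)\|^2]\ge \epsilon^2\Pr[A_t]$ together with the definition of $\theta$, the previous paragraph yields
\[
\E[f(x^{t-1})-f(x^t)] \;\ge\; \theta\,\bigl(\Pr[A_t]+\Pr[B_t]\bigr) \;\ge\; \theta\,\Pr[A_t\cup B_t].
\]
Telescoping and using $\E[f(x^T)]\ge B$ gives $\sum_{t=1}^T\Pr[A_t\cup B_t]\le \Delta/\theta$. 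If the algorithm never terminates early, $|\Gamma|=T$ and drawing a uniformly random index $t$ makes $\Pr[y^t \text{ non-critical}]\le \Delta/(T\theta)$. The early-termination branch is handled separately: there $\Gamma=\{x^{t+1}\}$ for some $t$, and the $\hopt$ contract already asserts that $x^{t+1}$ is $(\epsilon,\gamma)$-critical with probability at least $1-q$. Taking the worst of the two branches establishes part~1 with failure probability at most $\rho := \max(\Delta/(T\theta),q)$.

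Part~2 then follows by a boosting argument: the $k$ indices are sampled independently, so each returned iterate is non-critical with probability at most $\rho$, and the probability that all $k$ are non-critical is at most $\rho^{k}$; the stated choice $k = \Theta(\log(1/\zeta)/\log(1/\rho))$ makes this at most $\zeta$. The main obstacle is precisely this last step, because the $k$ draws are only independent conditional on the algorithm's random tape: one really obtains $\E[(q^\star)^k]$ where $q^\star$ is the realized non-critical fraction in $\Gamma$, and Jensen's inequality points the wrong way. I would resolve this either by a Markov-style split on $q^\star$ (bound $\Pr[q^\star>\rho']$ and bound the conditional all-non-critical probability by $(\rho')^{k}$, then optimize $\rho'$), or by phrasing part~2 as a conditional statement given the algorithm's run. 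The remaining moving parts—combining the two output branches and converting $\E[\|\nabla f\|^2]$ to a tail probability on $\|\nabla f\|$—are routine once the one-step descent bound is in hand.
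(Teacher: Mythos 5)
Your proposal is correct and follows essentially the same route as the paper: bound the expected one-step decrease from below by $\theta$ times a failure indicator, telescope against the budget $\Delta = f(x^0)-B$, and conclude that at most a $\Delta/(T\theta)$ fraction of the $T$ rounds can be ``bad,'' so a uniformly random index is good with the claimed probability. Your one-step bound is in fact a slightly sharper, unified version of the paper's argument: the paper splits into three mutually exclusive cases (gradient condition violated; gradient condition satisfied but Hessian condition violated; both satisfied) and extracts the decrease $(1-p)\epsilon^2 g$ from the $z^t$ branch in the first case and $p\,h$ from the $y^t$ branch in the second, whereas you collect both contributions simultaneously via $\theta(\Pr[A_t]+\Pr[B_t]) \ge \theta\Pr[A_t\cup B_t]$. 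The one substantive divergence is part~2. The obstacle you flag --- that the $k$ draws are independent only conditionally on the algorithm's tape, so Jensen gives $\E[(q^\star)^k]\ge(\E[q^\star])^k$ in the wrong direction --- is real under your per-realization definition of the events $A_t, B_t$, and your Markov-split repair would genuinely weaken the quantitative statement (one would need roughly $\Delta/(T\theta)\le\zeta^2$ rather than the stated $k$). The paper avoids this entirely by making ``Case~III membership'' a \emph{deterministic} property of the index $t$: the gradient condition in Definition~\ref{def:eps-accurate} is $\E[\|\nabla f(y^t)\|]\le\epsilon$, an expectation over the algorithm's randomness, so whether index $t$ is good is a fixed $0/1$ quantity and the $k$ uniform draws are genuine independent Bernoulli trials, giving $(\Delta/(T\theta))^k$ directly. (The paper is admittedly loose about the Hessian condition, which is stated per-realization; your framing is the more honest one there, but then one must either adopt the expectation-based reading or accept the weaker boosting bound.) So your ``main obstacle'' dissolves once you use the theorem's own expectation-based notion of criticality per index, which is the reading the paper's proof implicitly relies on.
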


The proof of the result is presented in Appendix~\ref{app:proof-conv}. The key point regarding the above result is that the overall convergence rate depends on the magnitude of both functions $g$ and $h$. Theorem ~\ref{thm:generic-conv} shows that the slowest amongst the subroutines $\gopt$ and $\hopt$ governs the overall rate of Algorithm~\ref{alg:framework}. Thus, it is important to ensure that both these procedures have good convergence. Also, note that the optimal setting for $p$ based on the result above satisfies $1/p = 1/\epsilon^2g(n,\epsilon) + 1/h(n,\epsilon,\gamma)$ . We defer further discussion of convergence to next section,  where we present more specific convergence and rate analysis.

\section{Concrete Instantiations}
\label{sec:algorithm}
We now present specific instantiations of our framework in this section. 
Before we state our key results, we discuss an important subroutine that is used as $\gopt$ for rest of this paper: \svrg. We give a brief description of the algorithm in this section and show that it meets the conditions required for a $\gopt$. \svrg~\cite{Johnson13,Reddi16} is a stochastic algorithm recently shown to be very effective for reducing variance in finite-sum problems. We seek to understand its benefits for nonconvex optimization, with a particular focus on the issue of escaping saddle points. Algorithm~\ref{alg:svrg} presents \svrg's pseudocode.

\begin{algorithm}[tb]\small
	\caption{SVRG$\left(x^0, \epsilon\right)$}
	\label{alg:svrg}
	\begin{algorithmic}[1]
		\STATE {\bfseries Input:} $x^0_m = x^0 \in \mathbb{R}^d$,  epoch length $m$, step sizes $\{\eta_i > 0\}_{i=0}^{m-1}$, iterations $T_g$, $S = \lceil T_g/m \rceil$
		\FOR{$s=0$ {\bfseries to} $S-1$}
		\STATE $\tilde{x}^{s} = x^{s+1}_0 = x^{s}_m$
		\STATE $g^{s+1} = \frac{1}{n} \sum_{i=1}^n \nabla f_{i}(\tilde{x}^{s})$
		\FOR{$t=0$ {\bfseries to} $m-1$}
		\STATE Uniformly randomly pick $i_t$ from $\{1, \dots, n\}$ \label{alg1:ln:sample}
		\STATE $v_t^{s+1} =  \nabla f_{i_t}(x^{s+1}_t) - \nabla f_{i_t}(\tilde{x}^{s}) + g^{s+1}$ \label{alg1:ln:update}
		\STATE $x^{s+1}_{t+1} = x^{s+1}_{t} - \eta_t v_t^{s+1} $
		\ENDFOR
		\ENDFOR
		\STATE {\bfseries Output:} $(y,z)$ where $y$ is Iterate $x_a$ chosen uniformly random from $\{\{x^{s+1}_t\}_{t=0}^{m-1}\}_{s=0}^{S-1}$ and $z = x^{S}_m$.
	\end{algorithmic}
\end{algorithm}

Observe that Algorithm~\ref{alg:svrg} is an epoch-based algorithm. At the start of each epoch $s$, a full gradient is calculated at the point $\tilde{x}^{s}$, requiring $n$ calls to the IFO. Within its inner loop \svrg performs $m$ stochastic updates. Suppose $m$ is chosen to be $O(n)$ (typically used in practice), then the total IFO calls per epoch is $\Theta(n)$. Strong convergence rates have been proved Algorithm~\ref{alg:svrg} in the context of convex and nonconvex optimization \cite{Johnson13,Reddi16}. The following result shows that $\svrg$ meets the requirements of a $\gopt$.
\begin{lemma}
\label{lem:svrg}
Suppose $\eta_t = \eta = 1/4Ln^{2/3}$, $m = n$ and $T_g = T_\epsilon$,  which depends on $\epsilon$, then Algorithm~\ref{alg:svrg} is a $\gopt$ with $g(n,\epsilon) = T_\epsilon/40Ln^{2/3}$.
\end{lemma}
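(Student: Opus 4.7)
The plan is to follow the nonconvex \svrg analysis of \cite{Reddi16} and show that the specific choice $\eta = 1/(4Ln^{2/3})$, $m = n$ yields the two oracle conditions \ref{itm:ga1} and \ref{itm:ga2} with the claimed $g$. The central tool is a Lyapunov function of the form
\[
R^{s+1}_t \;=\; \E\bigl[f(x^{s+1}_t) + c_t\,\|x^{s+1}_t - \tilde{x}^s\|^2\bigr],
\]
where the constants $c_t$ satisfy a backward recursion $c_t = c_{t+1}(1 + \eta\beta_t + 2\eta^2 L^2) + \eta^2 L^3$ anchored by $c_m = 0$, together with a free parameter $\beta_t > 0$. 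I would verify, via the standard descent lemma applied with step $\eta$ to the unbiased estimator $v^{s+1}_t$, together with the \svrg variance bound $\E[\|v^{s+1}_t - \nabla f(x^{s+1}_t)\|^2] \le L^2 \|x^{s+1}_t - \tilde{x}^s\|^2$, the one-step inequality
\[
R^{s+1}_{t+1} \;\le\; R^{s+1}_{t} \;-\; \Gamma_t\,\E\bigl[\|\nabla f(x^{s+1}_t)\|^2\bigr],
\]
for some $\Gamma_t > 0$ depending on $\eta,L,c_{t+1},\beta_t$.

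Next, I would specialize the parameters. Setting $\beta_t$ to a convenient constant (on the order of $L/n^{1/3}$) and unrolling the recursion for $c_t$ across $m = n$ inner steps with $\eta = 1/(4Ln^{2/3})$ keeps $c_0$ bounded by $O(L/n^{1/3})$ and produces a uniform lower bound $\min_t \Gamma_t \ge 1/(40 L n^{2/3})$; this is the delicate bookkeeping step that I expect to be the only real obstacle, essentially replaying the calculation of Theorem~2 of \cite{Reddi16}. Once this holds, telescoping over $t = 0, \ldots, m-1$ and noting that $R^{s+1}_0 = \E[f(\tilde x^s)]$ (because $x^{s+1}_0 = \tilde x^s$ and so the quadratic penalty vanishes) and $R^{s+1}_m = \E[f(\tilde x^{s+1})]$ (because $c_m = 0$) yields
\[
\sum_{s=0}^{S-1}\sum_{t=0}^{m-1} \E\bigl[\|\nabla f(x^{s+1}_t)\|^2\bigr] \;\le\; 40 L n^{2/3}\,\E\bigl[f(x^0) - f(x^S_m)\bigr].
\]
Since $y$ is drawn uniformly from the $mS = T_\epsilon$ inner iterates and $z = x^S_m$, dividing by $T_\epsilon$ gives exactly \ref{itm:ga2} with $g(n,\epsilon) = T_\epsilon/(40 L n^{2/3})$.

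Finally, for \ref{itm:ga1} I would exploit the same Lyapunov structure. Because $c_t \ge 0$ for every $t$, we have $\E[f(x^{s+1}_t)] \le R^{s+1}_t$, and the one-step inequality above shows $R^{s+1}_t$ is monotone non-increasing in $t$, hence $R^{s+1}_t \le R^{s+1}_0 = \E[f(\tilde x^s)]$. Combining this with the epoch-level telescoping $\E[f(\tilde x^{s+1})] = R^{s+1}_m \le R^{s+1}_0 = \E[f(\tilde x^s)]$ and induction on $s$ starting from $\tilde x^0 = x^0$ yields $\E[f(x^{s+1}_t)] \le f(x^0)$ for every inner iterate, and therefore $\E[f(y)] \le f(x^0)$ by averaging, which is \ref{itm:ga1}. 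The crux of the whole argument is the parameter calibration in the middle step; once the Lyapunov recursion is closed with $c_m = 0$, both properties fall out simultaneously.
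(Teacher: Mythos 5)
Your proposal is correct and follows essentially the same route as the paper's own proof: the same Lyapunov function with the backward recursion $c_t = c_{t+1}(1+\eta\beta_t+2\eta^2L^2)+\eta^2L^3$ anchored at $c_m=0$ (the paper's $\mu_t$), the same one-step descent inequality combined with the \svrg variance bound, the same calibration $\beta_t \sim L n^{-1/3}$ giving $\min_t \Gamma_t \ge 1/(40Ln^{2/3})$, and the same telescoping and endpoint identifications to obtain both \ref{itm:ga1} and \ref{itm:ga2}. No gaps.
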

In rest of this section, we discuss approaches using \svrg as a $\gopt$. In particular, we propose and provide convergence analysis for two different methods with different $\hopt$ but which use \svrg as a $\gopt$.

\subsection{Hessian descent}
\label{sec:hd}
The first approach is based on directly using the eigenvector corresponding to the smallest eigenvalue as a $\hopt$. More specifically, when the smallest eigenvalue of the Hessian is negative and reasonably large in magnitude, the Hessian information can be used to ensure descent in the objective function value. The pseudo-code for the algorithm is given in Algorithm~\ref{alg:hdescent}.

The key idea is to utilize the minimum eigenvalue information in order to make a descent step. If  $\lambda_{\min}(\nabla^2 f(x)) \leq -\gamma$ then the idea is to use this information to take a descent step. Note the subroutine is designed in a fashion such that the objective function value never increases. Thus, it naturally satisfies the requirement \ref{itm:ha1} of $\hopt$. The following result shows that $\hd$ is a $\hopt$.

\begin{lemma}
\label{lem:hdescent-hopt}
\hd is a $\hopt$ with $h(n,\epsilon,\gamma) = \frac{\rho}{24M^2} \gamma^3$.
\end{lemma}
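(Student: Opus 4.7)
The plan is to directly verify the two conditions \ref{itm:ha1} and \ref{itm:ha2} defining a $\hopt$ by analyzing the descent step that \hd{} performs along an approximate negative-curvature direction. I would begin by making precise what \hd{} does: at the input $x$, it first (approximately) computes a unit vector $v$ targeting the smallest eigenpair of $\nabla^2 f(x)$, using only ISO calls via a randomized Lanczos/power method. With probability at least $1-q$ this returns a $v$ satisfying $v^\top \nabla^2 f(x) v \le \tfrac{1}{2}\lambda_{\min}(\nabla^2 f(x)) + \tfrac{\gamma}{2}$ (or an analogous $\rho$-approximation), and checks $\lambda := v^\top\nabla^2 f(x) v$ against the threshold $-\gamma$. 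If $\lambda > -\gamma$, the routine declares $x$ to be a second-order critical point and returns $\tau=\emptyset$; otherwise it returns $\tau=\diamond$ together with $y = x + \alpha v$, where the sign of $\alpha$ is chosen so that $\alpha\langle\nabla f(x), v\rangle \le 0$ and the magnitude $|\alpha|$ is the minimizer of the cubic upper model below.

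The core estimate is the Hessian-Lipschitz bound
\begin{equation*}
f(x+\alpha v)\;\le\; f(x) + \alpha \langle \nabla f(x),v\rangle + \tfrac{\alpha^2}{2}\, v^\top \nabla^2 f(x)\, v + \tfrac{M}{6}|\alpha|^3\|v\|^3,
\end{equation*}
which holds for every $\alpha \in \reals$ and unit $v$. Picking $\mathrm{sign}(\alpha)$ to eliminate the linear term and substituting $v^\top \nabla^2 f(x) v \le -\rho\gamma$ (the $\rho$ in the lemma is exactly the approximation quality of the eigenvector routine) reduces the bound to $f(x+\alpha v)\le f(x) - \tfrac{\rho\gamma}{2}\alpha^2 + \tfrac{M}{6}|\alpha|^3$. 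Minimizing over $|\alpha|$ gives $|\alpha|=2\rho\gamma/M$ and, after elementary algebra, a guaranteed decrease of order $\rho^3\gamma^3/M^2$; absorbing constants (and, where needed, an additional factor coming from the approximation in expectation rather than in the worst case) yields the stated $h(n,\epsilon,\gamma) = \tfrac{\rho}{24M^2}\gamma^3$. This directly establishes \ref{itm:ha2} in the regime $\lambda_{\min}(\nabla^2 f(x)) \le -\gamma$, because the guarantee on the Lanczos output then propagates to $v^\top\nabla^2 f(x)v \le -\rho\gamma$.

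For \ref{itm:ha1} (monotonicity of the expected function value), I would split on the internal test: when $\tau=\emptyset$ the routine returns $y=x$, so trivially $\mathbb{E}[f(y)] = f(x)$; when $\tau = \diamond$ the same cubic-model computation shows that the chosen step produces $f(y) \le f(x)$ deterministically, not only in expectation, since the linear term is non-positive by the sign choice and the quadratic-plus-cubic contribution evaluated at the optimal $|\alpha|$ is strictly negative. Thus \ref{itm:ha1} holds in every branch. Finally, the output-in-expectation form of both conditions absorbs the randomness of the approximate eigenvector routine, which only contributes through its $\rho$-approximation factor and its $(1-q)$ success probability (the latter is what supports the ``$\tau=\emptyset\Rightarrow y$ is $(\epsilon,\gamma)$-critical with probability $\ge 1-q$'' statement in the definition of $\hopt$).

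The main obstacle I anticipate is not the descent calculation itself (which is a short one-dimensional optimization of a cubic) but rather bookkeeping the randomized eigenvector approximation: one must argue that the event in which Lanczos fails to deliver a $\rho$-approximate minimum eigenvector does not corrupt the expectation bounds \ref{itm:ha1}--\ref{itm:ha2}. The clean way to handle this is to condition on the success event, apply the descent inequality there, and use the fact that on the failure event the algorithm can be designed to simply output $y=x$ (so no increase in function value occurs). Once this is set up, the claimed constant $\tfrac{\rho}{24M^2}\gamma^3$ falls out of the worked-out cubic minimization with an explicit choice of step size.
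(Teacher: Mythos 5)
Your overall architecture matches the paper's proof: use the Hessian-Lipschitz cubic upper bound, choose the sign of the step to make the linear term non-positive, minimize the resulting one-dimensional quadratic-plus-cubic, condition on the success of the randomized eigenvector computation, and safeguard monotonicity separately. However, there is one concrete error in the bookkeeping of $\rho$. In Algorithm~\ref{alg:hdescent}, $\rho$ is the \emph{probability} that the returned unit vector $v$ satisfies $\langle v,\nabla^2 f(x)v\rangle \le \lambda_{\min}(\nabla^2 f(x)) + \gamma/2$; it is not an approximation ratio. On the success event, the correct curvature bound is therefore $v^{\top}\nabla^2 f(x)v \le -\gamma/2$ (using $\lambda_{\min}\le -\gamma$), with no $\rho$ appearing. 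Your substitution $v^{\top}\nabla^2 f(x)v \le -\rho\gamma$ feeds $\rho$ into the cubic minimization and produces a decrease of order $\rho^3\gamma^3/M^2$, and no amount of ``absorbing constants'' turns $\rho^3$ into the claimed linear factor $\rho$. The single factor of $\rho$ in $h(n,\epsilon,\gamma)=\tfrac{\rho}{24M^2}\gamma^3$ comes instead from the law of total expectation: with probability $\rho$ you get the deterministic decrease $\tfrac{1}{24M^2}\gamma^3$ (obtained from $|v^{\top}\nabla^2 f(x)v|\ge \gamma/2$ and the step $\alpha = |v^{\top}\nabla^2 f(x)v|/M$, giving $\tfrac{1}{3M^2}\cdot\tfrac{\gamma^3}{8}$), and with probability $1-\rho$ you get at worst no increase.

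A second, smaller issue is your handling of \ref{itm:ha1}. You propose to ``output $y=x$ on the failure event,'' but the algorithm cannot observe whether the Lanczos/power-method call succeeded. The paper's mechanism is the observable safeguard $y=\arg\min_{z\in\{u,x\}} f(z)$, which guarantees $f(y)\le f(x)$ deterministically in every branch, including when $v^{\top}\nabla^2 f(x)v>0$ and the cubic model gives no descent. Your deterministic-descent claim for the $\tau=\diamond$ branch is false without this safeguard, since on the failure event the quadratic term in the cubic upper bound can be positive. Replacing your unobservable conditioning with the explicit $\arg\min$ comparison repairs both \ref{itm:ha1} and the conditioning step in \ref{itm:ha2}, after which your argument coincides with the paper's.
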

The proof of the result is presented in Appendix~\ref{sec:hdescent-appendix}. With \svrg as $\gopt$ and \hd as $\hopt$, we show the following key result:
\begin{theorem}
\label{thm:hessian-descent-main}
Suppose \svrg with $m = n$, $ \eta_t = \eta = 1/4Ln^{2/3}$ for all $t \in \{1,...,m\}$ and $T_g = 40Ln^{2/3}/\epsilon^{1/2}$ is used as $\gopt$ and $\hd$ is used as $\hopt$ with $q = 0$, then Algorithm~\ref{alg:framework} finds a $(\epsilon, \sqrt{\epsilon})$-second order critical point in $T = O(\Delta/\min(p,1-p)\epsilon^{3/2})$ with probability at least $0.9$.
\end{theorem}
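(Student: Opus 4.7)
The plan is to derive Theorem~\ref{thm:hessian-descent-main} as a direct corollary of the generic convergence result in Theorem~\ref{thm:generic-conv}, with only elementary arithmetic required to instantiate the rate functions $g$ and $h$ for this particular pair of subroutines. So the proof is essentially a bookkeeping exercise on top of Lemmas~\ref{lem:svrg} and~\ref{lem:hdescent-hopt}.

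First, I would explicitly compute $g$ and $h$ for the specified parameter choices. From Lemma~\ref{lem:svrg}, $\svrg$ with $\eta = 1/(4Ln^{2/3})$, $m = n$, and $T_g = 40Ln^{2/3}/\epsilon^{1/2}$ satisfies \ref{itm:ga1}--\ref{itm:ga2} with
\[
g(n,\epsilon) \;=\; \frac{T_g}{40 L n^{2/3}} \;=\; \frac{1}{\epsilon^{1/2}}.
\]
From Lemma~\ref{lem:hdescent-hopt}, $\hd$ invoked with error parameters $(\epsilon,\sqrt{\epsilon})$ satisfies \ref{itm:ha1}--\ref{itm:ha2} with
\[
h(n,\epsilon,\sqrt{\epsilon}) \;=\; \frac{\rho}{24M^2}\,(\sqrt{\epsilon})^{3} \;=\; \frac{\rho}{24M^2}\,\epsilon^{3/2},
\]
and since the algorithm is deterministic once its random direction is fixed, we may take $q=0$ in the second-order certificate.

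Next, I would substitute these into the definition of $\theta$ used in Theorem~\ref{thm:generic-conv}:
\[
\theta \;=\; \min\!\Bigl((1-p)\,\epsilon^{2}\,g(n,\epsilon),\;\; p\,h(n,\epsilon,\sqrt{\epsilon})\Bigr) \;=\; \min\!\Bigl((1-p)\,\epsilon^{3/2},\;\; p\,\tfrac{\rho}{24M^2}\,\epsilon^{3/2}\Bigr).
\]
Absorbing $\rho$ and $M$ into the hidden constants (as declared at the end of Section~\ref{sec:background}), this simplifies to $\theta = \Omega(\min(p,1-p)\,\epsilon^{3/2})$. Picking $T = c\,\Delta/\theta$ for a sufficiently large absolute constant $c$ therefore ensures $T > \Delta/\theta$ and, more strongly, $\Delta/(T\theta) \le 0.1$.

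Finally, I would invoke part~1 of Theorem~\ref{thm:generic-conv}: a uniformly random index $t$ from the output set yields a $y^t$ that is $(\epsilon,\sqrt{\epsilon})$-second-order critical with probability at least $1 - \max(\Delta/(T\theta), q) \ge 1 - \max(0.1, 0) = 0.9$. Combining this with the above choice of $T$ gives the claimed iteration complexity $T = O(\Delta/(\min(p,1-p)\,\epsilon^{3/2}))$ and success probability $0.9$. The only genuinely delicate point is making sure that the $q=0$ setting is justified for $\hd$---i.e.\ that when $\hd$ returns $\tau = \emptyset$, the point is certified second-order critical deterministically (not merely with high probability)---which should follow from the construction of $\hd$ in Section~\ref{sec:hd} and its associated Lemma~\ref{lem:hdescent-hopt}; everything else is plug-and-chug.
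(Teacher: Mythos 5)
Your proposal is correct and follows essentially the same route as the paper, which proves this theorem by directly substituting $g(n,\epsilon)=T_g/(40Ln^{2/3})=\epsilon^{-1/2}$ from Lemma~\ref{lem:svrg} and $h(n,\epsilon,\sqrt{\epsilon})=\tfrac{\rho}{24M^2}\epsilon^{3/2}$ from Lemma~\ref{lem:hdescent-hopt} into Theorem~\ref{thm:generic-conv} and choosing $T$ so that $\Delta/(T\theta)\le 0.1$. Your one worry about $q=0$ is moot since \hd\ as defined always returns $\tau=\diamond$, so the $\tau=\emptyset$ branch never arises; everything else is exactly the paper's plug-and-chug.
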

The result directly follows from using Lemma~\ref{lem:svrg} and~\ref{lem:hdescent-hopt} in Theorem~\ref{thm:generic-conv}. The result shows that the iteration complexity of Algorithm~\ref{alg:framework} in this case is $O(\Delta/\epsilon^{3/2}\min(p,1-p))$. Thus, the overall IFO complexity of \svrg algorithm is $(n + T_g) \times T = O(n/\epsilon^{3/2} + n^{2/3}/\epsilon^2)$. Since each IFO call takes $O(d)$ time, the overall time complexity of all $\gopt$ steps is $O(nd/\epsilon^{3/2} + n^{2/3}d/\epsilon^2)$. To understand the time complexity of $\hd$, we need the following result \cite{Agarwal16}. 

\begin{algorithm}[t]\small
	\caption{\hd$\left(x, \epsilon, \gamma\right)$}
	\label{alg:hdescent}
	\begin{algorithmic}[1]
		\STATE Find $v$ such that $\|v\| = 1$, and with probability at least $\rho$ the following inequality holds: $\langle v, \nabla^2 f(x) v \rangle \leq \lambda_{min}(\nabla^2 f(x)) + \frac{\gamma}{2}$.
		\STATE Set $\alpha = |\langle v, \nabla^2 f(x) v \rangle|/M$.
		\STATE $u = x - \alpha \ \text{sign}(\langle v, \nabla f(x) \rangle) v$.
		\STATE $y = \arg\min_{z \in \{u, x\}} f(z)$
		\STATE {\bfseries Output:} $(y,\diamond)$.
	\end{algorithmic}
\end{algorithm}

\begin{preposition}
The time complexity of finding $v \in \mathbb{R}^d$ that $\|v\| = 1$, and with probability at least $\rho$ the following inequality holds: $\langle v, \nabla^2 f(x) v \rangle \leq \lambda_{min}(\nabla^2 f(x)) + \frac{\gamma}{2}$ is $O(nd + n^{3/4}d/\gamma^{1/2})$.
\end{preposition}
Note that each iteration of Algorithm~\ref{alg:framework} in this case has just linear dependence on $d$. Since the total number of $\hd$ iterations is $O(\Delta/\min(p,1-p)\epsilon^{3/2})$ and each iteration has the complexity of $O(nd + n^{3/4}d/\epsilon^{1/4})$, using the above remark, we obtain an overall time complexity of $\hd$ is $O(nd/\epsilon^{3/2} + n^{3/4}d/\epsilon^{7/4})$. Combining this with the time complexity of \svrg, we get the following result.

\begin{corollary}
\label{cor:hessian-descent}
The overall running time of Algorithm~\ref{alg:framework} to find a $(\epsilon,\sqrt{\epsilon})$-second order critical point, with parameter settings used in Theorem~\ref{thm:hessian-descent-main}, is $O(nd/\epsilon^{3/2} + n^{3/4}d/\epsilon^{7/4} + n^{2/3}d/\epsilon^2)$.
\end{corollary}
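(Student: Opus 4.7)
The plan is to multiply the per-iteration cost of each of the two subroutines by the outer iteration count $T$ from Theorem~\ref{thm:hessian-descent-main}, and then add the two totals. Concretely, Theorem~\ref{thm:hessian-descent-main} already supplies $T = O(\Delta/(\min(p,1-p)\,\epsilon^{3/2}))$ outer iterations, and at every outer iteration Algorithm~\ref{alg:framework} performs exactly one call to $\gopt=\svrg$ and one call to $\hopt=\hd$. So I would only need per-invocation cost bounds for each subroutine, substitute $\gamma=\sqrt{\epsilon}$ (since we are seeking an $(\epsilon,\sqrt{\epsilon})$-second-order critical point), and sum.

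For the \svrg piece, a single call with the parameters from Theorem~\ref{thm:hessian-descent-main} ($m=n$, $\eta=1/(4Ln^{2/3})$, $T_g=40Ln^{2/3}/\epsilon^{1/2}$) consists of one full gradient per epoch plus $m$ stochastic gradients per epoch, over $S=\lceil T_g/m\rceil$ epochs, for a total of $n + T_g = O(n + n^{2/3}/\epsilon^{1/2})$ IFO calls. Since each IFO call costs $O(d)$, one \svrg invocation costs $O(nd + n^{2/3}d/\epsilon^{1/2})$ time. Multiplying by $T$ gives an aggregate \svrg cost of $O(nd/\epsilon^{3/2} + n^{2/3}d/\epsilon^{2})$, absorbing the $1/\min(p,1-p)$ factor into the big-$O$.

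For the \hd piece, the dominant work per call is computing the approximate minimum-eigenvalue direction $v$, whose complexity is given by the preposition immediately preceding the corollary: $O(nd + n^{3/4}d/\gamma^{1/2})$. With $\gamma=\sqrt{\epsilon}$ this becomes $O(nd + n^{3/4}d/\epsilon^{1/4})$; the remaining operations in Algorithm~\ref{alg:hdescent} (one inner product with $\nabla^2 f(x)v$, a sign, and two function evaluations) are lower-order. Multiplying by $T$ yields $O(nd/\epsilon^{3/2} + n^{3/4}d/\epsilon^{7/4})$ for the aggregate \hd cost. Adding the two totals produces exactly the claimed bound $O(nd/\epsilon^{3/2} + n^{3/4}d/\epsilon^{7/4} + n^{2/3}d/\epsilon^{2})$.

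Because the ingredients are already in place, there is no real obstacle; the only thing to be careful about is bookkeeping. In particular, I would verify that (i) the $n$ full-gradient cost inside each \svrg call is not double-counted against the $T_g$ stochastic updates, (ii) substituting $\gamma=\sqrt{\epsilon}$ into the eigenvector-complexity bound genuinely produces the $n^{3/4}d/\epsilon^{1/4}$ per-call term (and hence the $n^{3/4}d/\epsilon^{7/4}$ aggregate), and (iii) the probability-of-success bookkeeping from Theorem~\ref{thm:hessian-descent-main} (with $q=0$) does not force additional repetitions that would inflate the cost. Once those checks pass, the corollary follows by straightforward addition.
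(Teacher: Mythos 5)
Your proposal is correct and follows essentially the same route as the paper: multiply the outer iteration count $T = O(\Delta/(\min(p,1-p)\epsilon^{3/2}))$ by the per-call costs of \svrg ($O(nd + n^{2/3}d/\epsilon^{1/2})$ via $n+T_g$ IFO calls at $O(d)$ each) and of \hd ($O(nd + n^{3/4}d/\epsilon^{1/4})$ from the preposition with $\gamma=\sqrt{\epsilon}$), then sum. The bookkeeping checks you flag are exactly the ones the paper implicitly relies on, so nothing is missing.
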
 

Note that the dependence on $\epsilon$ is much better in comparison to that of Noisy SGD used in \cite{Ge15}. Furthermore, our results are competitive with \cite{Agarwal16, Yair16} in their respective settings, but with a much simpler algorithm and analysis. We also note that our algorithm is faster than the one proposed in~\cite{Jin17}, which has a time complexity of $O(nd/\epsilon^2)$. 

\subsection{Cubic Descent} 

In this section, we show that the cubic regularization method in \cite{nesterov2006} can be used as $\hopt$. More specifically,  here $\hopt$ approximately solves the following optimization problem:
\begin{equation}
\tag{$\cd$}
y = \arg\min_{z}  \left\langle \nabla f(x), z - x\right\rangle + \frac{1}{2} \left\langle z - x, \nabla^2 f(x) (z - x) \right\rangle + \frac{M}{6} \|z - x\|^3,
\end{equation}
and returns $(y,\diamond)$ as output. The following result can be proved for this approach.

\begin{theorem}
\label{thm:cubic-descent-main}
Suppose \svrg (same as Theorem~\ref{thm:hessian-descent-main}) is used as $\gopt$ and $\cd$ is used as $\hopt$ with $q = 0$, then Algorithm~\ref{alg:framework} finds a $(\epsilon, \sqrt{\epsilon})$-second order critical point in $T = O(\Delta/\min(p,1-p)\epsilon^{3/2})$ with probability at least $0.9$.
\end{theorem}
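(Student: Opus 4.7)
The strategy is to reuse the generic machinery of Theorem~\ref{thm:generic-conv} exactly as in the proof of Theorem~\ref{thm:hessian-descent-main}: since $\svrg$ has already been shown to be a $\gopt$ with $g(n,\epsilon)=T_g/(40Ln^{2/3})$ in Lemma~\ref{lem:svrg}, all that remains is to verify that $\cd$ is a valid $\hopt$ with an appropriate decrease function $h(n,\epsilon,\gamma)$, and then plug both into Theorem~\ref{thm:generic-conv} with $\gamma=\sqrt{\epsilon}$.

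First I would check the two $\hopt$ requirements for $\cd$. For \ref{itm:ha1}, note that $z=x$ is feasible for the cubic subproblem and attains objective $0$, so the (approximate) minimizer $y$ satisfies
\[
\langle\nabla f(x),y-x\rangle+\tfrac12\langle y-x,\nabla^2 f(x)(y-x)\rangle+\tfrac{M}{6}\|y-x\|^3\;\le\;0,
\]
and combining this with the Hessian-Lipschitz bound $f(y)\le f(x)+\langle\nabla f(x),y-x\rangle+\tfrac12\langle y-x,\nabla^2 f(x)(y-x)\rangle+\tfrac{M}{6}\|y-x\|^3$ immediately yields $f(y)\le f(x)$ (deterministically, so condition \ref{itm:ha1} holds with expectation as well). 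For \ref{itm:ha2}, I would invoke the classical cubic-regularization analysis of Nesterov--Polyak: whenever $\lambda_{\min}(\nabla^2 f(x))\le-\gamma$, the minimizer of the cubic model decreases the function by at least $h(n,\epsilon,\gamma)=c\,\gamma^{3}/M^{2}$ for an absolute constant $c>0$. This is the step I expect to be the main obstacle to write cleanly, because it requires replicating (or citing) the eigenvector-based lower bound on the model decrease along the minimum-eigenvalue direction of $\nabla^2 f(x)$ and then combining it with the Hessian-Lipschitz inequality; the argument is standard but notationally heavier than in $\hd$, where the descent direction is given explicitly.

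Having established that $\cd$ is a $\hopt$ with $h(n,\epsilon,\gamma)=\Omega(\gamma^{3}/M^{2})$, and since $\cd$ returns $\tau=\diamond$ on every call, we may take $q=0$. Next I would substitute into Theorem~\ref{thm:generic-conv} with $\gamma=\sqrt{\epsilon}$ and the chosen $T_g=40Ln^{2/3}/\epsilon^{1/2}$, so that
\[
\epsilon^{2}\,g(n,\epsilon)\;=\;\epsilon^{2}\cdot\frac{T_g}{40Ln^{2/3}}\;=\;\epsilon^{3/2},\qquad h(n,\epsilon,\sqrt{\epsilon})\;=\;\Omega(\epsilon^{3/2}/M^{2}),
\]
hence $\theta=\min((1-p)\epsilon^{2}g(n,\epsilon),\,p\,h(n,\epsilon,\sqrt{\epsilon}))=\Omega(\min(p,1-p)\,\epsilon^{3/2})$.

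Finally, with $\Delta=f(x^{0})-B$, picking $T=\Theta(\Delta/(\min(p,1-p)\epsilon^{3/2}))$ large enough so that $\Delta/(T\theta)\le 0.1$ and recalling $q=0$, part~1 of Theorem~\ref{thm:generic-conv} guarantees that a uniformly random iterate $y^{t}$ drawn from the output set $\Gamma$ is a $(\epsilon,\sqrt{\epsilon})$-second-order critical point with probability at least $1-\max(\Delta/(T\theta),q)\ge 0.9$, which is exactly the claim. The iteration count matches $T=O(\Delta/(\min(p,1-p)\epsilon^{3/2}))$, completing the proof modulo the Nesterov--Polyak decrease lemma cited above.
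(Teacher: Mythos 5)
Your proposal is correct and follows essentially the same route as the paper: the paper likewise verifies \ref{itm:ha1} and \ref{itm:ha2} for $\cd$ by appealing to Theorem~1 of \cite{nesterov2006} (obtaining $h(n,\epsilon,\gamma)=\Theta(\gamma^3/M^2)$, matching your $\Omega(\gamma^3/M^2)$ up to the constant) and then plugs this together with Lemma~\ref{lem:svrg} into Theorem~\ref{thm:generic-conv} exactly as you do. Your only deviation is that you spell out the \ref{itm:ha1} descent argument directly from feasibility of $z=x$ in the cubic model plus the Hessian-Lipschitz bound, rather than citing it, which is a harmless (and slightly more self-contained) elaboration.
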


In principle, Algorithm~\ref{alg:framework} with $\cd$ as $\hopt$ can converge without the use of $\gopt$ subroutine at each iteration since it essentially reduces to the cubic regularization method of~\cite{nesterov2006}. However, in practice, we would expect $\gopt$ to perform most of the optimization and $\hopt$ to be used for far fewer  iterations. Using the method developed in~\cite{nesterov2006} for solving $\cd$, we obtain the following corollary.

\begin{corollary}
The overall running time of Algorithm~\ref{alg:framework} to find a $(\epsilon,\sqrt{\epsilon})$-second order critical point, with parameter settings used in Theorem~\ref{thm:cubic-descent-main}, is $O(nd^\omega/\epsilon^{3/2} + n^{2/3}d/\epsilon^2)$.
\end{corollary}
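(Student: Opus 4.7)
The plan is to read the total running time as \emph{(outer iterations)} $\times$ \emph{(per-iteration cost)}, where each outer iteration of Algorithm~\ref{alg:framework} makes exactly one call to \svrg (as $\gopt$) and one call to $\cd$ (as $\hopt$). From Theorem~\ref{thm:cubic-descent-main}, the outer loop runs $T = O(1/\epsilon^{3/2})$ times (absorbing $\Delta$ and the $\min(p,1-p)$ factor into the $O(\cdot)$). So the total time is $T\cdot(C_{\svrg} + C_{\cd})$, where $C_{\svrg}$ and $C_{\cd}$ denote the per-call costs.

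First, I would bound $C_{\svrg}$ in exactly the same way as in Corollary~\ref{cor:hessian-descent}: with $m = n$, $\eta = 1/(4Ln^{2/3})$, and $T_g = 40 L n^{2/3}/\epsilon^{1/2}$, a single call performs $S = \lceil T_g/m\rceil$ epochs, where each epoch costs one full-gradient sweep ($n$ IFO calls) plus $m = n$ inner stochastic updates, giving $\Theta(n + T_g)$ IFO calls overall. Since each IFO call is $O(d)$, $C_{\svrg} = O((n + n^{2/3}/\epsilon^{1/2})d)$. Multiplying by $T$ yields the $\gopt$ contribution $O(nd/\epsilon^{3/2} + n^{2/3}d/\epsilon^2)$.

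The main obstacle is bounding $C_{\cd}$, i.e., the cost of one approximate minimization of the cubic model $\langle \nabla f(x), z-x\rangle + \tfrac12\langle z-x, \nabla^2 f(x)(z-x)\rangle + \tfrac{M}{6}\|z-x\|^3$. I would split the work into two pieces. (i)~\emph{Form the Hessian} $\nabla^2 f(x) = \tfrac{1}{n}\sum_i \nabla^2 f_i(x)$ by summing the $n$ individual Hessians; under the paper's standard convention that each $\nabla^2 f_i(x)$ can be written out in $O(d^2)$ time, this costs $O(nd^2)$. (ii)~\emph{Solve the cubic subproblem} using the Nesterov--Polyak scheme from \cite{nesterov2006}: compute an eigendecomposition of the $d\times d$ Hessian in $O(d^\omega)$ time; in that basis the subproblem becomes a one-dimensional search for the correct Lagrange multiplier on the cubic trust-region constraint, which runs in time polylogarithmic in $1/\epsilon$ and is suppressed per the paper's convention. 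Together this gives $C_{\cd} = O(nd^2 + d^\omega) = O(nd^\omega)$, using $\omega \geq 2$ to absorb $nd^2$ into $nd^\omega$.

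Combining the two pieces, $T\cdot C_{\cd} = O(nd^\omega/\epsilon^{3/2})$, which dominates the $nd/\epsilon^{3/2}$ portion of $T\cdot C_{\svrg}$, while the $n^{2/3}d/\epsilon^2$ portion survives. The total is therefore $O(nd^\omega/\epsilon^{3/2} + n^{2/3}d/\epsilon^2)$, matching the claim. The delicate step is (ii): one must carefully invoke an $O(d^\omega)$ solver for the cubic subproblem rather than any naive scheme that would scale with $n$ in its inner iterations, because it is precisely this bound that keeps the per-iteration cost at $nd^\omega$ instead of something larger.
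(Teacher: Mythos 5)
Your proposal is correct and follows essentially the same route the paper intends: the paper gives no detailed proof of this corollary, simply multiplying the $T = O(1/\epsilon^{3/2})$ outer iterations from Theorem~\ref{thm:cubic-descent-main} by the per-iteration costs, reusing the \svrg\ IFO accounting from Corollary~\ref{cor:hessian-descent} and charging $O(nd^\omega)$ per cubic subproblem via the Nesterov--Polyak solver. Your explicit split of the $\cd$ cost into Hessian formation ($O(nd^2)$) plus the $O(d^\omega)$ factorization, with $nd^2$ absorbed into $nd^\omega$, fills in the details exactly as the paper's remarks on cubic regularization suggest.
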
 

Here $\omega$ is the matrix multiplication constant. The dependence on $\epsilon$ is weaker in comparison to Corollary~\ref{cor:hessian-descent}. However, each iteration of $\cd$ is expensive (as seen from the factor $d^\omega$ in the corollary above) and thus, in high dimensional settings typically encountered in machine learning, this approach can be expensive in comparison to $\hd$.

\subsection{Practical Considerations}
\label{sec:practical}
The focus of this section was to demonstrate the wide applicability of our framework; wherein using a simple instantiation of this framework, we could achieve algorithms with fast convergence rates. To further achieve good empirical performance, we had to slightly modify these procedures. For $\hopt$, we found stochastic, adaptive and inexact approaches for solving $\hd$ and $\cd$ work well in practice. Due to lack of space, the exact description of these modifications is deferred to Appendix~\ref{sec:approx-cubic}. Furthermore, in the context of deep learning, empirical evidence suggests that first-order methods like $\adam$ \cite{Kingma14} exhibit behavior that is in congruence with properties \ref{itm:ga1} and \ref{itm:ga2}. While theoretical analysis for a setting where \adam is used as $\gopt$ is still unresolved, we nevertheless demonstrate its performance through empirical results in the following section.

\section{Experiments}
\label{sec:experiments}
We now present empirical results for our saddle point avoidance technique with an aim to highlight three aspects: (i) the framework successfully escapes non-degenerate saddle points, (ii) the framework is fast, and (iii) the framework is practical on large-scale problems.  All the algorithms are implemented on TensorFlow \cite{tensorflow2015-whitepaper}. In case of deep networks, the Hessian-vector product is evaluated using the trick presented in \cite{Pearlmutter94}. We run our experiments on a commodity machine with Intel$^\text{\textregistered}$ Xeon$^\text{\textregistered}$ CPU E5-2630 v4 CPU, 256GB RAM, and NVidia$^\text{\textregistered}$ Titan X (Pascal) GPU.

\begin{figure*}[t]
	\centering
	\begin{subfigure}[b]{0.28\textwidth}
		\includegraphics[trim={5mmm 5mm 5mm 5mm},clip, height=0.8\textwidth, width=\textwidth]{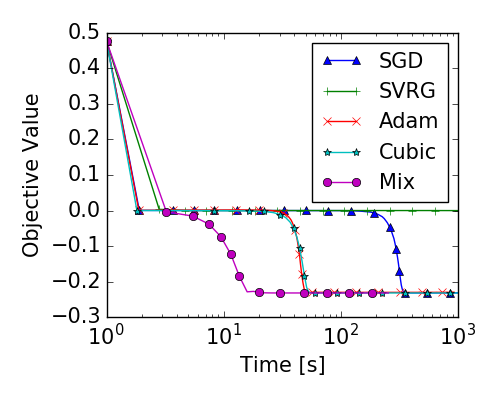}
	\end{subfigure}
	\begin{subfigure}[b]{0.28\textwidth}
		\includegraphics[trim={5mm 5mm 5mm 5mm},clip, height=0.8\textwidth, width=\textwidth]{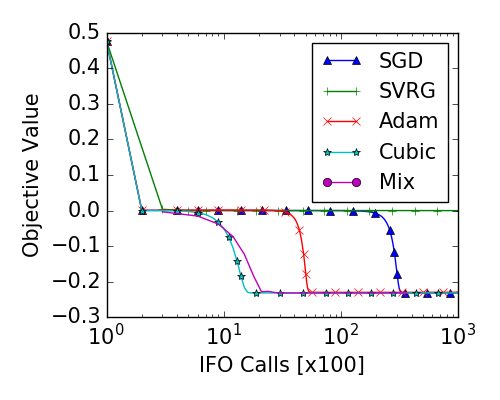}
	\end{subfigure}
	\begin{subfigure}[b]{0.28\textwidth}
		\includegraphics[trim={5mm 5mm 5mm 5mm},clip, height=0.8\textwidth, width=\textwidth]{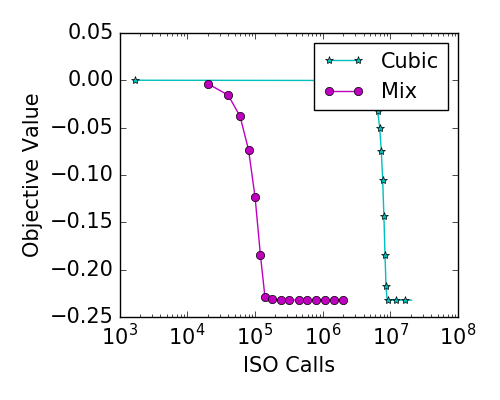}
	\end{subfigure}
	\caption{Comparison of various methods on a synthetic problem. Our mix framework successfully escapes saddle point and uses relatively few ISO calls in comparison to $\cd$.}
	\label{fig:toy-results}
    \vspace{-2mm}
\end{figure*}

{\bf Synthetic Problem} To demonstrate the fast escape from a saddle point by the proposed method, we consider the following simple nonconvex finite-sum problem:
\begin{equation}
\min_{x\in\reals^d} \frac{1}{n}\sum_{i=1}^n x^TA_ix + b_i^Tx + \|x\|_{10}^{10}
\end{equation}
Here the parameters are designed such that $\sum_i b_i = 0$ and $\sum_i A_i$ matrix has exactly one negative eigenvalue of $-0.001$ and other eigenvalues randomly chosen in the interval $[1,2]$. The total number of examples $n$ is set to be 100,000 and $d$ is $1000$. It is not hard to see that this problem has a non-degenerate saddle point at the origin. This allows us to explore the behaviour of different optimization algorithms in the vicinity of the saddle point. In this experiment, we compare a mix of \svrg and \hd (as in Theorem~\ref{thm:hessian-descent-main}) with $\sgd$ (with constant step size), $\adam$, $\svrg$ and $\cd$. The parameter of these algorithms is chosen by grid search so that it gives the best performance. The subproblem of $\cd$ was solved with gradient descent \cite{Yair16} until the gradient norm of the subproblem is reduced below $10^{-3}$. We study the progress of optimization, i.e., decrease in function value with wall clock time, IFO calls, and ISO calls. All algorithms were initialized with the same starting point very close to origin.

The results are presented in Figure~\ref{fig:toy-results}, which shows that our proposed mix framework was the \textit{fastest} to escape the saddle point in terms of wall clock time. We observe that performance of the first order methods suffered severely due to the saddle point. Note that \sgd eventually escaped the saddle point due to inherent noise in the mini-batch gradient. \cd, a second-order method, escaped the saddle point faster in terms of iterations using the Hessian information. But operating on Hessian information is expensive as a result this method was slow in terms of wall clock time. The proposed framework, which is a mix of the two strategies, inherits the best of both worlds by using cheap gradient information most of the time and reducing the use of relatively expensive Hessian information (ISO calls) by 100x. This resulted in \textit{faster} escape from saddle point in terms of wall clock time.

{\bf Deep Networks} To investigate the practical performance of the framework for deep learning problems, we applied it to two deep autoencoder optimization problems from \cite{hinton2006reducing} called “CURVES” and “MNIST”. Due to their high difficulty, performance on these problems has become a standard benchmark for neural network optimization methods, e.g. \cite{martens2015optimizing, sutskever2013importance, vinyals2012krylov, martens2010deep}. The  “CURVES” autoencoder consists of an encoder with layers of size (28x28)-400-200-100- 50-25-6 and a symmetric decoder totaling in 0.85M parameters. The six units in the code layer were linear and all the other units were logistic. The network was trained on 20,000 images and tested on 10,000 new images. The data set contains images of curves that were generated from three randomly chosen points in two dimensions. The “MNIST” autoencoder consists of an encoder with layers of size (28x28)-1000-500-250-30 and a symmetric decoder, totaling in 2.8M parameters. The thirty units in the code layer were linear and all the other units were logistic. The network was trained on 60,000 images and tested on 10,000 new images. The data set contains images of handwritten digits 0-9. The pixel intensities were normalized to lie between 0 and 1.\footnote{Data available at: \url{www.cs.toronto.edu/~jmartens/digs3pts_1.mat}, \href{www.cs.toronto.edu/~jmartens/mnist_all.mat}{\texttt{mnist\_all.mat}}}

\begin{figure*}[t]
	\centering
	\begin{subfigure}[b]{0.245\textwidth}
		\includegraphics[trim={5mm 5mm 5mm 5mm},clip, height=0.8\textwidth, width=\textwidth]{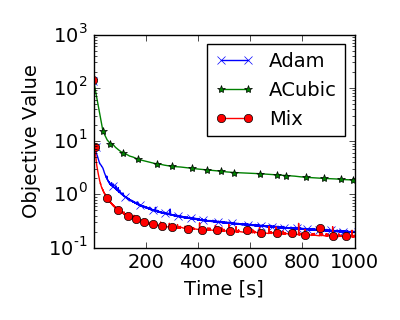}
	\end{subfigure}
	\begin{subfigure}[b]{0.245\textwidth}
		\includegraphics[trim={5mm 5mm 5mm 5mm},clip, height=0.8\textwidth, width=\textwidth]{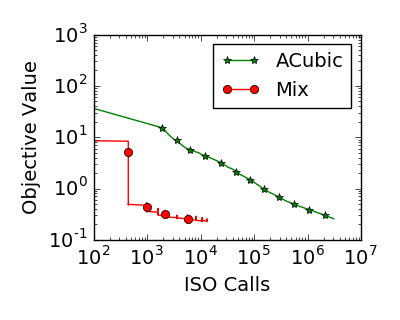}
	\end{subfigure}
	\begin{subfigure}[b]{0.245\textwidth}
		\includegraphics[trim={5mm 5mm 5mm 5mm},clip, height=0.8\textwidth, width=\textwidth]{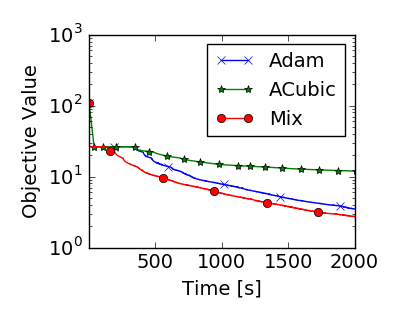}
	\end{subfigure}
	\begin{subfigure}[b]{0.245\textwidth}
		\includegraphics[trim={5mm 5mm 5mm 5mm},clip, height=0.8\textwidth, width=\textwidth]{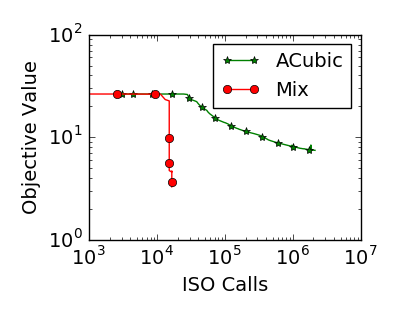}
	\end{subfigure}
	\caption{Comparison of various methods on  CURVES and MNIST Deep Autoencoder. Our mix approach converges faster than the baseline methods and uses relatively few ISO calls in comparison to $\acd$.}
	\label{fig:curves-mnist}
	\vspace{-2mm}
\end{figure*}

As an instantiation of our framework, we use a mix of \adam, which is popular in deep learning community, and an \acd for the practical reasons mentioned in Section~\ref{sec:practical}. This method with $\adam$ and $\acd$. The parameters of these algorithms were chosen to produce the best generalization on a held out test set. The regularization parameter $M$ was chosen as the smallest value such that the function value does not fluctuate in the first 10 epochs. We use the initialization suggested in \cite{martens2010deep} and a mini-batch size of 1000 for all the algorithms. We report objective function value against wall clock time and ISO calls. 

The results are presented in Figure~\ref{fig:curves-mnist}, which shows that our proposed mix framework was the \textit{fastest} to escape the saddle point in terms of wall clock time. \adam took considerably more time to escape the saddle point, especially in the case of MNIST. While \acd escaped the saddle point in relatively fewer iterations, each iteration required considerably large number of ISO calls; as a result, the method was extremely slow in terms of wall clock time, despite our efforts to improve it via approximations and code optimizations. On the other hand, our proposed framework, seamlessly balances these two methods, thereby, resulting in the fast decrease of training loss.

\vspace*{-6pt}
\section{Discussion}
\label{sec:disc}
\vspace*{-6pt}
In this paper, we examined a generic strategy to escape saddle points in nonconvex finite-sum problems and presented its convergence analysis. The key intuition is to alternate between a first-order and second-order based optimizers; the latter is mainly intended to escape points that are only stationary but are not second-order critical points. We presented two different instantiations of our framework and provided their detailed convergence analysis. While both our methods explicity use the Hessian information, one can also use noisy first-order methods as $\hopt$ (see for e.g. noisy $\sgd$ in \cite{Ge15}). In such a scenario, we exploit the negative eigenvalues of the Hessian to escape saddle points by using isotropic noise, and do not explicitly use ISO. For these methods, under strict-saddle point property \cite{Ge15}, we can show convergence to local optima within our framework.

We primarily focused on obtaining second-order critical points for nonconvex finite-sums~\eqref{eq:finite-sum}. This does not necessarily imply low test error or good generalization capabilities. Thus, we should be careful when interpreting the results presented in this paper. A detailed discussion or analysis of these issues is out of scope of this paper. While a few prior works argue for convergence to local optima, the exact connection between generalization and local optima is not well understood, and is an interesting open problem. Nevertheless, we believe the techniques presented in this paper can be used alongside other optimization tools for faster and better nonconvex optimization.

\bibliographystyle{plainnat}
\bibliography{bibfile}

\clearpage
\appendix
\begin{center}
	\bfseries\large
	Appendix: A Generic Approach for Escaping Saddle points
\end{center}

\section{Proof of Theorem~\ref{thm:generic-conv}}
\label{app:proof-conv}

The case of $\tau = \emptyset$ can be handled in a straightforward manner, so let us focus on the case where $\tau = \diamond$. We split our analysis into cases, each analyzing the change in objective function value depending on second-order criticality of $y^t$.  

We start with the case where the gradient condition of second-order critical point is violated and then proceed to the case where the Hessian condition is violated.

{\bf Case I}: $\mathbb{E}[\|\nabla f(y^t)\|] \geq \epsilon$ for some $t > 0$

We first observe the following: $\mathbb{E}[\|\nabla f(y^t)\|^2] \geq (\mathbb{E}\|\nabla f(y^t)\|)^2 \geq \epsilon^2$. This follows from a straightforward application of Jensen's inequality. From this inequality, we have the following:
\begin{align}
\epsilon^2 \leq \mathbb{E}[\|\nabla f(y^t)\|^2] \leq \frac{1}{g(n,\epsilon)} \mathbb{E}[f(x^{t-1}) - f(z^t)].
\label{eq:proof1-1}
\end{align}
This follows from the fact that $y^t$ is the output of $\gopt$ subroutine, which satisfies the condition that for $(y,z)$ = $\gopt(x, n, \epsilon)$, we have
$$
\mathbb{E}[\|\nabla f(y)\|^2] \leq \frac{1}{g(n,\epsilon)} \mathbb{E}[f(x) - f(z)].
$$
From Equation~\eqref{eq:proof1-1}, we have
$$
\mathbb{E}[f(z^t)] \leq \mathbb{E}[f(x^{t-1})] - \epsilon^2 g(n,\epsilon).
$$
Furthermore, due to the property of non-increasing nature of $\gopt$, we also have $\mathbb{E}[y^t] \leq \mathbb{E}[f(x^{t-1})]$.

We now focus on the $\hopt$ subroutine. From the property of $\hopt$ that the objective function value is non-increasing, we have $\mathbb{E}[f(x^{t})] \leq \mathbb{E}[f(u^{t})]$. Therefore, combining with the above inequality, we have
\begin{align}
\mathbb{E}[f(x^{t})]  &\leq \mathbb{E}[f(u^{t})]  \nonumber \\
&= p  \mathbb{E}[f(y^{t})] + (1 - p)  \mathbb{E}[f(z^{t})] \nonumber \\
&\leq p \mathbb{E}[f(x^{t-1})] + (1 - p) (\mathbb{E}[f(x^{t-1})] - \epsilon^2 g(n,\epsilon)) \nonumber \\
&=\mathbb{E}[f(x^{t-1})] - (1 - p) \epsilon^2 g(n,\epsilon).
\label{eq:proof1- case1}
\end{align}
The first equality is due to the definition of $u^t$ in Algorithm~\ref{alg:framework}. Therefore, when the gradient condition is violated, irrespective of whether $\lambda_{\min}(\nabla^2 f(x)) \leq - \gamma$ or $\nabla^2 f(y^t) \succeq -\gamma \mathbb{I}$, the objective function value always decreases by at least $ \epsilon^2 g(n,\epsilon)$.

{\bf Case II}: $\mathbb{E}[\|\nabla f(y^t)\|] < \epsilon$ and $\lambda_{\min}(\nabla^2 f(x)) \leq - \gamma$ for some $t > 0$

In this case, we first note that for $y = \hopt(x, n, \epsilon, \gamma)$ and $\lambda_{\min}(\nabla^2 f(x)) \leq - \gamma$,  we have $\mathbb{E}[f(y)] \leq f(x) - h(n, \epsilon,\gamma)$. Observe that $x^t = \hopt(u^t, n, \epsilon, \gamma)$. Therefore, if $u^t = y^t$ and $\lambda_{\min}(\nabla^2 f(x)) \leq - \gamma$, then we have
$$
\mathbb{E}[f(x^t)| u^t=y^t] \leq f(y^t) - h(n, \epsilon, \gamma) \leq f(x^{t-1}) - h(n, \epsilon,\gamma).
$$
The second inequality is due to the non-increasing property of $\gopt$. On the other hand, if $u^t = z^t$, we have  hand, if we have $\mathbb{E}[f(x^t)| u^t=z^t] \leq f(z^t)$. This is due to the non-increasing property of $\hopt$. Combining the above two inequalities and using the law of total expectation, we get
\begin{align}
\mathbb{E}[f(x^t)] &= p \mathbb{E}[f(x^t)| u^t=y^t]  + (1 - p) \mathbb{E}[f(x^t)| u^t=z^t] \nonumber \\
&\leq p \left( \mathbb{E}[f(y^t)] - h(n, \epsilon,\gamma) \right) + (1 - p) \mathbb{E}[f(z^t)] \nonumber  \\
&\leq p\left( \mathbb{E}[f(x^{t-1})] - h(n, \epsilon,\gamma)\right) + (1 - p) \mathbb{E}[f(x^{t-1})] \nonumber \\
&=  \mathbb{E}[f(x^{t-1})] - p h(n, \epsilon,\gamma) .
\label{eq:proof1- case2}
\end{align}

The second inequality is due to he non-increasing property of $\gopt$. Therefore, when the hessian condition is violated, the objective function value always decreases by at least $p h(n, \epsilon,\gamma)$.

{\bf Case III}: $\mathbb{E}[\|\nabla f(y^t)\|] < \epsilon$ and $\nabla^2 f(y^t) \succeq -\gamma \mathbb{I}$ for some $t > 0$

This is the favorable case for the algorithm. The only condition to note is that the objective function value will be non-increasing in this case too. This is, again, due to the non-increasing properties of subroutines $\gopt$ and $\hopt$. In general, greater the occurrence of this case during the course of the algorithm, higher will the probability that the output of our algorithm satisfies the desired property.

The key observation is that Case I \& II cannot occur large number of times since each of these cases strictly decreases the objective function value.  In particular, from Equation~\eqref{eq:proof1- case1} and~\eqref{eq:proof1- case2}, it is easy to see that each occurrence of Case I \& II the following holds:
$$
\mathbb{E}[f(x^{t})]  \leq \mathbb{E}[f(x^{t-1})]  - \theta,
$$
where $\theta  = \min((1-p)\epsilon^2 g(n,\epsilon), p h(n,\epsilon,\gamma))$. Furthermore, the function $f$ is lower bounded by B, thus, Case I \& II cannot occur more than $(f(x^0) - B)/\theta$ times. Therefore, the probability of occurrence of Case III is at least $1 - (f(x^0) - B)/(T\theta)$, which completes the first part of the proof.

The second part of the proof simply follows from first part. As seen above, the probability of Case I \& II is at most $(f(x^0) - B)/T\theta$. Therefore, probability that an element of the set $S$ falls in Case III is at least $1 - ((f(x^0) - B)/T\theta)^k$, which gives us the required result for the second part. 

\section{Proof of Lemma~\ref{lem:svrg}}

\begin{proof}
The proof follows from the analysis in~\cite{Reddi16} with some additional reasoning. We need to show two properties: \ref{itm:ga1} and \ref{itm:ga2}, both of which are based on objective function value. To this end, we start with an update in the $s^{\text{th}}$ epoch. We have the following:
	\begin{align}
	\mathbb{E}[f(x^{s+1}_{t+1})] &\leq \mathbb{E}[f(x^{s+1}_{t}) + \langle \nabla f(x^{s+1}_t), x^{s+1}_{t+1} - x^{s+1}_t \rangle \nonumber + \tfrac{L}{2} \| x^{s+1}_{t+1} - x^{s+1}_t \|^2] \nonumber \\
	&\leq \mathbb{E}[f(x^{s+1}_{t}) - \eta_t \|\nabla f(x^{s+1}_{t})\|^2 + \tfrac{L\eta_t^2}{2} \|v^{s+1}_t \|^2].
	\label{eq:svrg-proof-eq1}
	\end{align}
	The first inequality is due to $L$-smoothness of the function $f$ . The second inequality simply follows from the unbiasedness of \svrg update in Algorithm~\ref{alg:svrg}. For the analysis of the algorithm, we need the following Lyapunov function:
	$$
	A^{s+1}_{t} := \mathbb{E}[f(x^{s+1}_{t}) + \mu_{t} \|x^{s+1}_{t} - \tilde{x}^s\|^2].
	$$
	This function is a combination of objective function and the distance of the current iterate from the latest snapshot $\tilde{x}_s$. Note that the term $\mu_t$ is introduced only for the analysis and is not part of the algorithm (see Algorithm~\ref{alg:svrg}). Here $\{\mu_t\}_{t=0}^m$ is chosen such the following holds:
	$$
	\mu_{t} = \mu_{t+1}(1 + \eta_t\beta_t + 2\eta_t^2L^2 ) +  \eta_t^2L^3,
	$$
	for all $t \in \{0, \cdots, m-1\}$ and $\mu_m = 0$. For bounding the Lypunov function $A$, we need the following bound on the distance of the current iterate from the latest snapshot:
	\begin{align}
	&\mathbb{E}[\|x^{s+1}_{t+1} - \tilde{x}^s\|^2] = \mathbb{E}[\|x^{s+1}_{t+1} - x^{s+1}_t + x^{s+1}_t - \tilde{x}^s\|^2] \nonumber \\
	&= \mathbb{E}[\|x^{s+1}_{t+1} - x^{s+1}_t\|^2 + \|x^{s+1}_t - \tilde{x}^s\|^2 + 2\langle x^{s+1}_{t+1} - x^{s+1}_t, x^{s+1}_t - \tilde{x}^s\rangle] \nonumber \\
	&= \mathbb{E}[\eta_t^2\|v^{s+1}_t\|^2 + \|x^{s+1}_t - \tilde{x}^s\|^2] \nonumber  - 2\eta_t \mathbb{E}[\langle \nabla f(x^{s+1}_t), x^{s+1}_t - \tilde{x}^s\rangle]\\
	&\leq \mathbb{E}[\eta_t^2\|v^{s+1}_t\|^2 + \|x^{s+1}_t - \tilde{x}^s\|^2]  + 2 \eta_t \mathbb{E}\left[\tfrac{1}{2\beta_t} \|\nabla f(x^{s+1}_t)\|^2 + \tfrac{1}{2}\beta_t \| x^{s+1}_t - \tilde{x}^s \|^2 \right].
	\label{eq:svrg-proof-eq2}
	\end{align}
	The second equality is due to the unbiasedness of the update of $\svrg$. The last inequality follows from a simple application of Cauchy-Schwarz and Young's inequality. Substituting Equation~\eqref{eq:svrg-proof-eq1} and Equation~\eqref{eq:svrg-proof-eq2} into the Lypunov function $A^{s+1}_{t+1}$, we obtain the following:
	\begin{align}
	A^{s+1}_{t+1} &\leq \mathbb{E}[f(x^{s+1}_{t}) - \eta_t \|\nabla f(x^{s+1}_{t})\|^2 + \tfrac{L\eta_t^2}{2} \|v^{s+1}_t \|^2] \nonumber \\
	&  \qquad + \mathbb{E}[\mu_{t+1}\eta_t^2\|v^{s+1}_t\|^2 + \mu_{t+1}\|x^{s+1}_t - \tilde{x}^s\|^2] \nonumber \\
	&  \qquad + 2 \mu_{t+1}\eta_t \mathbb{E}\left[\tfrac{1}{2\beta_t} \|\nabla f(x^{s+1}_t)\|^2 + \tfrac{1}{2}\beta_t \| x^{s+1}_t - \tilde{x}^s \|^2 \right] \nonumber\\
	&\leq \mathbb{E}[f(x^{s+1}_{t}) - \left(\eta_t - \tfrac{\mu_{t+1}\eta_t}{\beta_t}\right) \|\nabla f(x^{s+1}_{t})\|^2 \nonumber\\
	&  \qquad + \left(\tfrac{L\eta_t^2}{2} + \mu_{t+1}\eta_t^2 \right)\mathbb{E}[\|v^{s+1}_t\|^2] + \left(\mu_{t+1} + \mu_{t+1}\eta_t\beta_t \right) \mathbb{E}\left[\| x^{s+1}_t - \tilde{x}^s \|^2 \right].
	\label{eq:svrg-proof-eq3}
	\end{align}
	To further bound this quantity, we use Lemma~\ref{lem:nonconvex-variance-lemma} to bound $\mathbb{E}[\|v^{s+1}_{t}\|^2]$, so that upon substituting it in Equation~\eqref{eq:svrg-proof-eq3}, we see that
	\begin{align*}
	& A^{s+1}_{t+1} \leq \mathbb{E}[f(x^{s+1}_{t})]  - \left(\eta_t - \tfrac{\mu_{t+1}\eta_t}{\beta_t} - \eta_t^2L - 2\mu_{t+1}\eta_t^2\right) \mathbb{E}[\|\nabla f(x^{s+1}_{t})\|^2] \nonumber\\
	& \qquad \qquad + \left[\mu_{t+1}\bigl(1 + \eta_t\beta_t + 2\eta_t^2L^2\bigr)+\eta_t^2L^3\right]
	\mathbb{E}\left[\| x^{s+1}_t - \tilde{x}^s \|^2 \right] \nonumber \\
	& \leq A^{s+1}_{t} - \bigl(\eta_t - \tfrac{\mu_{t+1}\eta_t}{\beta_t} - \eta_t^2L - 2\mu_{t+1}\eta_t^2\bigr) \mathbb{E}[\|\nabla f(x^{s+1}_{t})\|^2].
	\end{align*}
	The second inequality follows from the definition of $\mu_{t}$ and $A_t^{s+1}$. Since $\eta_t = \eta = 1/(4Ln^{2/3})$ for $j > 0$ and $t \in \{0, \dots, j-1\}$, 
	\begin{align}
	A^{s+1}_{j} \leq A^{s+1}_0 - \upsilon_n \nlsum_{t=0}^{j-1} \mathbb{E}[\|\nabla f(x^{s+1}_{t})\|^2],
	\label{eq:R-j-lyp}
	\end{align}
	where $$\upsilon_n =  \bigl(\eta_t - \tfrac{\mu_{t+1}\eta_t}{\beta_t} - \eta_t^2L - 2\mu_{t+1}\eta_t^2\bigr).$$
	We will prove that for the given parameter setting $ \upsilon_n > 0$ (see the proof below). With $ \upsilon_n > 0$, it is easy to see that $A^{s+1}_j \leq A^{s+1}_0$. Furthermore, note that $A^{s+1}_0 = \mathbb{E}[f(x^{s+1}_0) + \mu_0\|x^{s+1}_0 - \tilde{x}^s\|^2] = \mathbb{E}[f(x^{s+1}_0)]$ since $x^{s+1}_0 = \tilde{x}^s$ (see Algorithm~\ref{alg:svrg}). Also, we have 
	$$\mathbb{E}[f(x^{s+1}_j) +  \mu_j\|x^{s+1}_j - \tilde{x}^s\|^2] \leq \mathbb{E}[f(x^{s+1}_0)]$$ 
	and thus, we obtain $\mathbb{E}[f(x^{s+1}_j)] \leq \mathbb{E}[f(x^{s+1}_0)]$
	for all $j \in \{0, ...., m\}$. Furthermore, using  simple induction and the fact that $x^{s+1}_0 = x^s_m$ for all epoch $s \in \{0, ... ,S-1\}$, it easy to see that $\mathbb{E}[f(x^{s+1}_j)] \leq f(x^{0})$. Therefore, with the definition of $y$ specified in the output of Algorithm~\ref{alg:svrg}, we see that the condition ~\ref{itm:ga1} of $\gopt$ is satisfied for $\svrg$ algorithm.  
	
	We now prove that $\upsilon_n > 0$ and also~\ref{itm:ga2} of $\gopt$ is satisifed for $\svrg$ algorithm. By using telescoping the sum with $j = m$ in Equation~\eqref{eq:R-j-lyp}, we obtain
	\begin{align*}
	\nlsum_{t=0}^{m-1} \mathbb{E}[\|\nabla f(x^{s+1}_{t})\|^2] \leq \frac{A^{s+1}_{0} - A^{s+1}_{m}}{\upsilon_n}.
	\end{align*}
	This inequality in turn implies that
	\begin{align}
	\label{eq:descent-property}
	\nlsum_{t=0}^{m-1} \mathbb{E}[\|\nabla f(x^{s+1}_{t})\|^2] \leq \frac{\mathbb{E}[f(\tilde{x}^s) - f(\tilde{x}^{s+1})]}{ \upsilon_n},
	\end{align}
	where we used that $A^{s+1}_{m} = \mathbb{E}[f(x^{s+1}_m)] = \mathbb{E}[f(\tilde{x}^{s+1})]$ (since $\mu_m = 0$), and that $A^{s+1}_{0} = \mathbb{E}[f(\tilde{x}^s)]$ (since $x^{s+1}_0 = \tilde{x}^s$). Now sum over all epochs to obtain 
	\begin{align}
	\frac{1}{T_g} \sum_{s=0}^{S-1}\sum_{t=0}^{m-1} \mathbb{E}[\|\nabla f(x^{s+1}_{t})\|^2] \leq \frac{\mathbb{E}[f(x^{0}) - f(x^S_m)]}{T_g \upsilon_n}.
	\label{eq:nonconvex-cor-eq1}
	\end{align}
	Here we used the  the fact that $\tilde{x}^0 = x^0$. To obtain a handle on $ \upsilon_n$ and complete our analysis, we will require an upper bound on $\mu_{0}$. We observe that $\mu_0 = \tfrac{L}{16n^{4/3}} \tfrac{(1 + \theta)^m - 1}{\theta}$ where $\theta = 2\eta^2L^2 + \eta\beta$. This is obtained using the relation $\mu_{t} = \mu_{t+1}(1 + \eta\beta + 2\eta^2L^2 ) +  \eta^2L^3$ and the fact that $\mu_m = 0$. Using the specified values of $\beta$ and $\eta$ we have
	\begin{align*}
	\theta = 2\eta^2L^2 + \eta\beta = \frac{1}{8n^{4/3}} + \frac{1}{4n} \leq \frac{3}{4n}.
	\end{align*}
	Using the above bound on $\theta$, we get 
	\begin{align}
	\label{eq:c0-bound}
	\mu_0 &= \frac{L}{16n^{4/3}} \frac{(1 + \theta)^m - 1}{\theta} = \frac{L ((1 + \theta)^m - 1)}{2(1 + 2n^{1/3})} \nonumber \\
	&\leq \frac{L ((1 + \frac{3}{4n})^{\lfloor \nicefrac{4n}{3} \rfloor} - 1)}{2(1 + 2n^{1/3})}  \leq n^{-1/3}(L (e - 1)/4),
	\end{align}
	wherein the second inequality follows upon noting that $(1 + \frac{1}{l})^l$ is increasing for $l>0$ and $\lim_{l \rightarrow \infty} (1 + \frac{1}{l})^l = e$ (here $e$ is the Euler's number). Now we can lower bound $ \upsilon_n$, as
	\begin{align*}
	 \upsilon_n &= \min_t \bigl(\eta - \tfrac{\mu_{t+1}\eta}{\beta} - \eta^2L - 2\mu_{t+1}\eta^2\bigr) \geq \bigl(\eta - \tfrac{\mu_{0}\eta}{\beta} - \eta^2L - 2\mu_{0}\eta^2\bigr) \geq \frac{1}{40Ln^{2/3}}.
	\end{align*}
	The first inequality holds since $\mu_t$ decreases with $t$. The second inequality holds since (a) $\mu_0/\beta$ can be upper bounded by  $(e-1)/4$ (follows from Equation~\eqref{eq:c0-bound}), (b) $\eta^2L \leq \eta/4$ and (c) $2\mu_0\eta^2 \leq (e-1)\eta/8$ (follows from Equation~\eqref{eq:c0-bound}). Substituting the above lower bound in Equation~\eqref{eq:nonconvex-cor-eq1}, we obtain the  following:
		\begin{align}
		\frac{1}{T_g} \sum_{s=0}^{S-1}\sum_{t=0}^{m-1} \mathbb{E}[\|\nabla f(x^{s+1}_{t})\|^2] \leq \frac{40Ln^{2/3}\mathbb{E}[f(x^{0}) - f(x^S_m)]}{T_g}.
		\end{align}
	From the definition of $(y,z)$ in output of Algorithm~\ref{alg:svrg} i.e., $y$ is Iterate $x_a$ chosen uniformly random from $\{\{x^{s+1}_t\}_{t=0}^{m-1}\}_{s=0}^{S-1}$ and $z = x^{S}_m$, it is clear that Algorithm~\ref{alg:svrg} satisfies the ~\ref{itm:ga2} requirement of $\gopt$ with $g(n,\epsilon) = T_\epsilon/40Ln^{2/3}$. Since both~\ref{itm:ga1} and ~\ref{itm:ga2} are satisified for Algorithm~\ref{alg:svrg}, we conclude that $\svrg$ is a $\gopt$.
\end{proof}

\section{Proof of Lemma~\ref{lem:hdescent-hopt}}
\label{sec:hdescent-appendix}

\begin{proof}
The first important observation is that the function value never increases because $y = \arg\min_{z \in \{u, x\}} f(z)$ i.e., $f(y) \leq f(x)$, thus satisfying \ref{itm:ha1} of $\hopt$. We now analyze the scenario where $\lambda_{min}(\nabla^2 f(x)) \leq -\gamma$. Consider the event where we obtain $v$ such that
$$
\langle v, \nabla^2 f(x) v \rangle \leq \lambda_{min}(\nabla^2 f(x)) + \frac{\gamma}{2}.
$$
This event (denoted by $\mathcal{E}$) happens with at least probability $\rho$. Note that, since $\lambda_{min}(\nabla^2 f(x)) \leq -\gamma$, we have $\langle v, \nabla^2 f(x) v \rangle \leq -\tfrac{\gamma}{2}$. In this case, we have the following relationship:
\begin{align}
f(y) &\leq f(x) + \langle \nabla f(x), y - x \rangle + \frac{1}{2} (y - x)^T \nabla^2 f(x) (y - x) + \frac{M}{6} \|y - x\|^3 \nonumber \\
&= f(x) - \alpha |\langle \nabla f(x), v \rangle| + \frac{\alpha^2}{2} v^T \nabla^{2} f(x) v + \frac{M\alpha^3}{6} \|v\|^3 \nonumber \\
&\leq f(x) + \frac{\alpha^2}{2} v^T \nabla^{2} f(x) v + \frac{M\alpha^3}{6} \nonumber \\
&\leq f(x) - \frac{1}{2M^2} |v^T \nabla^{2} f(x) v|^3 + \frac{1}{6M^2} |v^T \nabla^{2} f(x) v|^3 \nonumber \\
&= f(x) - \frac{1}{3M^2} |v^T \nabla^{2} f(x) v|^3 \leq f(x) - \frac{1}{24M^2} \gamma^3.
\label{eq:hd-decrease}
\end{align}
The first inequality follows from the $M$-lipschitz continuity of the Hessain $\nabla^2 f(x)$. The first equality follows from the update rule of $\hd$. The second inequality is obtained by dropping the negative term and using the fact that  $\|v\| = 1$ . The second equality is obtained by substituting $\alpha = \frac{|v^T \nabla^{2} f(x) v|}{M}$. The last inequality is due to the fact that$\langle v, \nabla^2 f(x) v \rangle \leq -\tfrac{\gamma}{2}$.
In the other scenario where
$$
\langle v, \nabla^2 f(x) v \rangle \leq \lambda_{min}(\nabla^2 f(x)) + \frac{\gamma}{2},
$$
we can at least ensure that $f(y) \leq f(x)$ since  $y = \arg\min_{z \in \{u, x\}} f(z)$. Therefore, we have
\begin{align}
\mathbb{E}[f(y)]  &= \rho \mathbb{E}[f(y)|\mathcal{E}] + (1- \rho) \mathbb{E}[f(y)|\bar{\mathcal{E}}] \nonumber \\
&\leq \rho \mathbb{E}[f(y)|\mathcal{E}]  + (1-\rho) f(x) \nonumber \\
&\leq \rho \left[f(x) - \tfrac{\rho}{24M^2} \gamma^3\right]  + (1-\rho) f(x) \nonumber \\
&= f(x) - \tfrac{\rho}{24M^2} \gamma^3.
\end{align}
The last inequality is due to Equation~\eqref{eq:hd-decrease}. Hence, $\hopt$ satisfies \ref{itm:ha2} of $\hopt$ with $h(n,\epsilon,\gamma) = \frac{\rho}{24M^2} \gamma^3$,  thus concluding the proof.
\end{proof}

\section{Proof of Theorem~\ref{thm:cubic-descent-main}}

First note that cubic method is a descent method (refer to Theorem 1 of \cite{nesterov2006}); thus, \ref{itm:ha1} is trivially satisfied. Furthermore, cubic descent is a $\hopt$ with $h(n,\epsilon,\gamma) = \frac{2\gamma^3}{81M^3} \gamma^3$. This, again, follows from Theorem 1 of \cite{nesterov2006}. The result easily follows from the aforementioned observations.

	\section{Other Lemmas}
	
The following bound on the variance of $\svrg$ is useful for our proof \cite{Reddi16}. 
\begin{lemma}\cite{Reddi16}
	\label{lem:nonconvex-variance-lemma}
	Let $v^{s+1}_t$ be computed by Algorithm~\ref{alg:svrg}. Then,
	\begin{align*}
	\mathbb{E}[\|v^{s+1}_t\|^2] \leq 2\mathbb{E}[\|\nabla f(x^{s+1}_{t})\|^2] + 2L^2\mathbb{E}[\|x^{s+1}_{t} - \tilde{x}^{s}\|^2].
	\end{align*}
\end{lemma}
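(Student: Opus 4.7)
The plan is to bound $\mathbb{E}\|v_t^{s+1}\|^2$ by a standard decomposition into a ``true gradient'' piece plus a mean-zero stochastic correction, then bound the latter via $L$-smoothness.

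First, I would rewrite the update direction around the true gradient at the current iterate. Recalling that $g^{s+1} = \nabla f(\tilde{x}^s)$ and $v_t^{s+1} = \nabla f_{i_t}(x^{s+1}_t) - \nabla f_{i_t}(\tilde{x}^s) + \nabla f(\tilde{x}^s)$, I would write
\begin{equation*}
v_t^{s+1} = \nabla f(x^{s+1}_t) + \underbrace{\left[\nabla f_{i_t}(x^{s+1}_t) - \nabla f_{i_t}(\tilde{x}^s) - \bigl(\nabla f(x^{s+1}_t) - \nabla f(\tilde{x}^s)\bigr)\right]}_{=: \xi_t}.
\end{equation*}
Because $i_t$ is drawn uniformly from $\{1,\dots,n\}$ and $\nabla f = \frac{1}{n}\sum_i \nabla f_i$, the correction $\xi_t$ has conditional mean zero given $x^{s+1}_t$ and $\tilde{x}^s$. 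Then by the elementary inequality $\|a+b\|^2 \leq 2\|a\|^2 + 2\|b\|^2$ and taking expectations,
\begin{equation*}
\mathbb{E}\|v_t^{s+1}\|^2 \;\leq\; 2\,\mathbb{E}\|\nabla f(x^{s+1}_t)\|^2 + 2\,\mathbb{E}\|\xi_t\|^2.
\end{equation*}

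Next, I would use the fact that for any random vector $X$ with finite second moment, $\mathbb{E}\|X - \mathbb{E}X\|^2 \leq \mathbb{E}\|X\|^2$ (variance is upper bounded by the second moment). Applying this conditionally to $X = \nabla f_{i_t}(x^{s+1}_t) - \nabla f_{i_t}(\tilde{x}^s)$, whose conditional mean is exactly $\nabla f(x^{s+1}_t) - \nabla f(\tilde{x}^s)$, yields
\begin{equation*}
\mathbb{E}\|\xi_t\|^2 \;\leq\; \mathbb{E}\bigl\|\nabla f_{i_t}(x^{s+1}_t) - \nabla f_{i_t}(\tilde{x}^s)\bigr\|^2.
\end{equation*}

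Finally, I would invoke the assumed $L$-smoothness of each individual $f_i$, which gives $\|\nabla f_{i_t}(x^{s+1}_t) - \nabla f_{i_t}(\tilde{x}^s)\| \leq L\,\|x^{s+1}_t - \tilde{x}^s\|$ pointwise in $i_t$, hence $\mathbb{E}\|\xi_t\|^2 \leq L^2\, \mathbb{E}\|x^{s+1}_t - \tilde{x}^s\|^2$. Combining the two bounds yields exactly the claimed inequality. None of the steps is a real obstacle; the only subtle point, and the one I would be careful to state explicitly, is that the ``variance $\leq$ second moment'' inequality is applied conditionally on the sigma-algebra generated by $x^{s+1}_t$ and $\tilde{x}^s$ before the outer expectation is taken, which is what makes $\xi_t$ genuinely centered in that step.
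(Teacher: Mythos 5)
Your proof is correct and follows essentially the same route as the paper's: the same decomposition of $v_t^{s+1}$ around $\nabla f(x_t^{s+1})$, the inequality $\|a+b\|^2 \le 2\|a\|^2 + 2\|b\|^2$, the bound $\mathbb{E}\|\zeta-\mathbb{E}\zeta\|^2 \le \mathbb{E}\|\zeta\|^2$ applied to the centered correction, and $L$-smoothness of the individual $f_i$. Your version is if anything slightly cleaner, since you make the conditioning explicit and state the elementary inequality with the correct constant (the paper's write-up cites it as ``$(a+b)^2 \le a^2+b^2$'', which is a typo).
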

\begin{proof}
	We use the definition of $v^{s+1}_t$ to get
	\begin{align*}
	&\mathbb{E}[\|v^{s+1}_t\|^2] = \mathbb{E}[\| \left(\nabla f_{i_t}(x^{s+1}_{t}) - \nabla f_{i_t}(\tilde{x}^{s})\right) + \nabla f(\tilde{x}^{s}) \|^2] \\
	&= \mathbb{E}[\| \left(\nabla f_{i_t}(x^{s+1}_{t}) - \nabla f_{i_t}(\tilde{x}^{s})\right) + \nabla f(\tilde{x}^{s}) - \nabla f(x^{s+1}_{t}) +  \nabla f(x^{s+1}_{t}) \|^2]\\
	&\leq 2\mathbb{E}[\|\nabla f(x^{s+1}_{t})\|^2] + 2 \mathbb{E}\left[\left\| \nabla f_{i_t}(x^{s+1}_{t}) - \nabla f_{i_t}(\tilde{x}^{s}) - \mathbb{E}[\nabla f_{i_t}(x^{s+1}_{t}) - \nabla f_{i_t}(\tilde{x}^{s})] \right\|^2 \right]
	\end{align*}
	The inequality follows from the simple fact that $(a + b)^2 \leq a^2 + b^2$. From the above inequality, we get the following:
	\begin{align*}
 \mathbb{E}[\|v^{s+1}_t\|^2] &\leq 2\mathbb{E}[\|\nabla f(x^{s+1}_{t})\|^2] + 2 \mathbb{E}\|\nabla f_{i_t}(x^{s+1}_{t}) - \nabla f_{i_t}(\tilde{x}^{s})\|^2 \\
	& \leq 2\mathbb{E}[\|\nabla f(x^{s+1}_{t})\|^2] + 2L^2 \mathbb{E}[\|x^{s+1}_{t} - \tilde{x}^{s}\|^2]
	\end{align*}
	The first inequality follows by noting that for a random variable $\zeta$, $\mathbb{E}[\|\zeta - \mathbb{E}[\zeta]\|^2] \leq \mathbb{E}[\|\zeta\|^2]$. The last inequality follows from $L$-smoothness of $f_{i_t}$.
\end{proof}
	
\section{Approximate Cubic Regularization}
\label{sec:approx-cubic}
Cubic regularization method of [19] is designed to operate on full batch, i.e., it does not exploit the finite-sum structure of the problem and requires the computation of the gradient and the Hessian on the entire dataset to make an update. However, such full-batch methods do not scale gracefully with the size of data and become prohibitively expensive on large datasets. To overcome this challenge, we devised an approximate cubic regularization method described below:

\begin{enumerate}
\item Pick a mini-batch $\mathcal{B}$ and obtain the gradient and the hessian based on $\mathcal{B}$, i.e.,
\begin{equation}
g = \frac{1}{|\mathcal{B}|} \sum_{i\in \mathcal{B}} \nabla f_i(x) \qquad H = \frac{1}{|\mathcal{B}|} \sum_{i\in \mathcal{B}} \nabla^2 f_i(x)
\end{equation}
\item Solve the sub-problem
\begin{equation}
v^* = \arg\min_{v}  \left\langle g, v\right\rangle + \frac{1}{2} \left\langle v, Hv \right\rangle + \frac{M}{6} \|v\|^3
\end{equation}
\item Update: $x \leftarrow x + v^*$
\end{enumerate}

We found that this mini-batch training strategy, which requires the computation
of the gradient and the Hessian on a small subset of the dataset, to work well on a few datasets (CURVES, MNIST, CIFAR10). A similar method has been analysed in \cite{cartis2017}.

Furthermore, in many deep-networks, adaptive per-parameter learning rate helps immensely \cite{Kingma14}. One possible explanation for this is that the scale of the gradients in each layer of the network often differ by several orders of magnitude. A well-suited optimization method should take this into account. This is the reason for popularity of methods like \adam or \rmsprop in the deep learning community. On similar lines, to account for different per-parameter behaviour in cubic regularization, we modify the sub-problem by adding a diagonal matrix $M_d$ in addition to the scalar regularization coefficient $M$, i.e.,  
\begin{equation}
\min_{v}  \left\langle g, v\right\rangle + \frac{1}{2} \left\langle v, Hv \right\rangle + \frac{1}{6}  M\|M_dv\|^3.
\end{equation}
Also we devised an adaptive rule to obtain the diagonal matrix as $M_d = \mathsf{diag}((s + 10^{-12})^{1/9})$, where $s$ is maintained as a moving average of third order polynomial of the mini-batch gradient $g$, in a fashion similar to \rmsprop and \adam:
\begin{equation}
s \leftarrow \beta s + (1-\beta)(|g|^3 + 2g^2),
\end{equation}
where $|g|^3$ and $g^2$ are vectors such that $[|g|^3]_i = |g_i|^3$ and $[g^2]_i = g_i^2$  respectively for all $i \in [n]$. The experiments reported on CURVES and MNIST in this paper utilizes both the above modifications to the cubic regularization, with $\beta$ set to 0.9. We refer to this modified procedure as ACubic in our results.

\section{Experiment Details}
In this section we provide further experimental details and results to aid reproducibility.

\begin{figure*}[t]
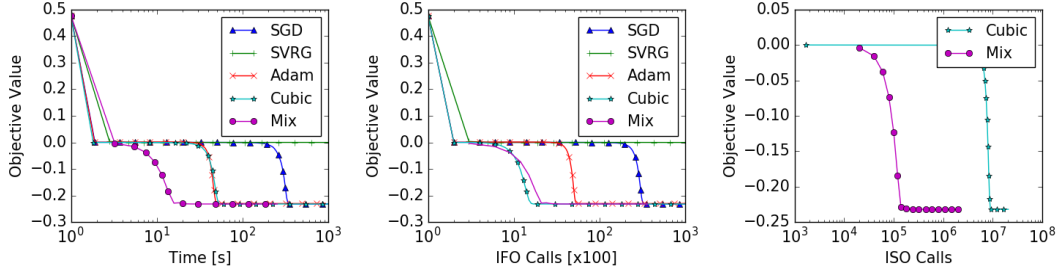

	\centering
	\begin{subfigure}[b]{0.32\textwidth}
		\includegraphics[trim={5mmm 5mm 5mm 5mm},clip, height=0.8\textwidth, width=\textwidth]{figures/toy_time.png}
	\end{subfigure}
	\hfill
	\begin{subfigure}[b]{0.32\textwidth}
		\includegraphics[trim={5mm 5mm 5mm 5mm},clip, height=0.8\textwidth, width=\textwidth]{figures/toy_dp.png}
	\end{subfigure}
	\hfill
	\begin{subfigure}[b]{0.32\textwidth}
		\includegraphics[trim={5mm 5mm 5mm 5mm},clip, height=0.8\textwidth, width=\textwidth]{figures/toy_iso.png}
	\end{subfigure}
	\caption{Comparison of various methods on a synthetic problem. Our mix framework successfully escapes saddle point. }
	\label{fig:toy-results-app}
\end{figure*}

\subsection{Synthetic Problem}
The parameter selection for all the methods were carried as follows:
\begin{enumerate}[leftmargin=*, itemsep=0mm,partopsep=0pt,parsep=0pt]
\item \sgd: The scalar step-size was determined by a grid search.
\item \adam: We performed a grid search over $\alpha$ and $\varepsilon$ parameters of \adam tied together, i.e., $\alpha=\varepsilon$.
\item \svrg: The scalar step-size was determined by a grid search.
\item \cd: The regularization parameter $M$ was chosen by grid search. The sub-problem was solved with gradient descent \cite{Yair16} with the step-size of solver to be $10^{-2}$ and run till the gradient norm of the sub-problem is reduced below $10^{-3}$.
\end{enumerate}
\vspace{-2mm}

{\bf Further Observations} The results are presented in Figure~\ref{fig:toy-results-app}. The other first order methods like \adam with higher noise could escape relatively faster whereas \svrg with reduced noise stayed stuck at the saddle point.

\subsection{Deep Networks}

\begin{figure*}[h]
	\centering
	\begin{subfigure}[b]{0.32\textwidth}
		\includegraphics[trim={5mm 5mm 5mm 5mm},clip, height=0.8\textwidth, width=\textwidth]{figures/curves_time.png}
	\end{subfigure}
	\hfill
	\begin{subfigure}[b]{0.32\textwidth}
		\includegraphics[trim={5mm 5mm 5mm 5mm},clip, height=0.8\textwidth, width=\textwidth]{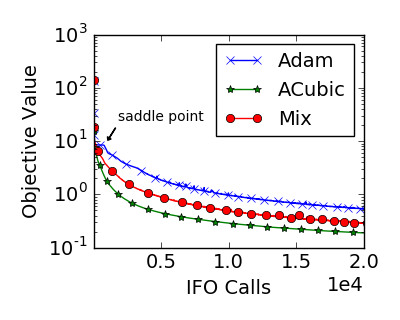}
	\end{subfigure}
	\hfill
	\begin{subfigure}[b]{0.32\textwidth}
		\includegraphics[trim={5mm 5mm 5mm 5mm},clip, height=0.8\textwidth, width=\textwidth]{figures/curves_iso.png}
	\end{subfigure}
		\begin{subfigure}[b]{0.32\textwidth}
			\includegraphics[trim={5mm 5mm 5mm 5mm},clip, height=0.8\textwidth, width=\textwidth]{figures/mnist_time.png}
		\end{subfigure}
		\hfill
		\begin{subfigure}[b]{0.32\textwidth}
			\includegraphics[trim={5mm 5mm 5mm 5mm},clip, height=0.8\textwidth, width=\textwidth]{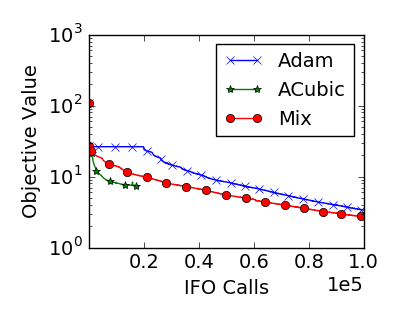}
		\end{subfigure}
		\hfill
		\begin{subfigure}[b]{0.32\textwidth}
			\includegraphics[trim={5mm 5mm 5mm 5mm},clip, height=0.8\textwidth, width=\textwidth]{figures/mnist_iso.png}
		\end{subfigure}
	\caption{Comparison of various methods on a Deep Autoencoder on CURVES (top) and MNIST (bottom). Our mix approach converges faster than the baseline methods and uses relatively few ISO calls in comparison to $\acd$}
	\label{fig:mnist-curves_app}
\end{figure*}
		
{\bf Methods} The parameter selection for all the methods were carried as follows::
\begin{enumerate}[leftmargin=*, itemsep=0mm,partopsep=0pt,parsep=0pt]
\item \adam: We performed a grid search over $\alpha$ and $\varepsilon$ parameters of \adam so as to produce the best generalization on a held out test set. We found it to be $\alpha=10^{-3},\varepsilon=10^{-3}$ for CURVES and $\alpha=10^{-2},\varepsilon=10^{-1}$ for MNIST.
\item \acd: The regularization parameter $M$ was chosen as the largest value  such function value does not jump in first 10 epochs. We found it to be $M=10^3$ for both CURVES and MNIST. The sub-problem was solved with gradient descent \cite{Yair16} with the step-size of solver to be $10^{-3}$ and run till the gradient norm of the sub-problem is reduced below 0.1.
\end{enumerate}

\end{document}